\title{\large{\textsc{Free-RBF-KAN: Kolmogorov–Arnold Networks with Adaptive Radial Basis Functions for Efficient Function Learning}}}
\author{\normalsize\scshape Shao-Ting Chiu\thanks{Department of Electrical and Computer Engineering, Texas A\&M University, College Station, TX, USA.
Email: \href{mailto:stchiu@tamu.edu}{stchiu@tamu.edu},
\href{mailto:ulisses@tamu.edu}{ulisses@tamu.edu}.
}\hspace{-1em}  
\and 
\normalsize\scshape Siu Wun Cheung\thanks{Center for Applied Scientific Computing, Lawerance Livermore National Laboratory, Livermore, CA, USA. 
This work was performed under the auspices of the U.S. Department of Energy by Lawerance Livermore National Laboratory under Contract DE-AC52-07NA27344 (LLNL-JRNL-2012535) and was supported by the LLNL-LDRD program under Project No. 25-ERD-051 and the LLNL Computing Scholar Program.
Email: \href{mailto:cheung26@llnl.gov}{cheung26@llnl.gov},
\href{mailto:li50@llnl.gov}{li50@llnl.gov},
\href{mailto:lee1029@llnl.gov}{lee1029@llnl.gov}.
}\hspace{-1em} 
\and 
\normalsize\scshape Ulisses Braga-Neto\footnotemark[1]\hspace{-1em} 
\and 
\normalsize\scshape Chak Shing Lee\footnotemark[2]\hspace{-1em}
\and 
\normalsize\scshape Rui Peng Li\footnotemark[2]
}
\date{}
\newcommand\todoin[2][]{\todo[inline, caption={2do}, #1]{
\begin{minipage}{\textwidth-4pt}#2\end{minipage}}}
\def\v{\textbf}
\def\m{\textbf}
\newtheorem{theorem}{Theorem}[section]
\newtheorem{lemma}[theorem]{Lemma}
\newtheorem{remark}[theorem]{Remark}
\newtheorem{corollary}[theorem]{Corollary}
\begin{document}

\maketitle

\begin{abstract}
    Kolmogorov–Arnold Networks (KANs) offer a promising framework for approximating complex nonlinear functions, yet the original B-spline formulation suffers from significant computational overhead due to De Boor's algorithm. While recent RBF-based variants improve efficiency, they often sacrifice the approximation accuracy inherent in the original spline-based design. To bridge this gap, we propose Free-RBF-KAN, an architecture that integrates adaptive learning grids and trainable smoothness parameters to enable expressive, high-resolution function approximation. Our method utilizes learnable RBF shapes that dynamically align with activation patterns, and we provide the first formal universal approximation proof for the RBF-KAN family. Empirical evaluations across multiscale regression, physics-informed PDEs, and operator learning demonstrate that Free-RBF-KAN can achieve accuracy comparable to its B-spline counterparts while delivering significantly faster training and inference. These results establish Free-RBF-KAN as an efficient and adaptive alternative for high-dimensional structured modeling tasks.
\end{abstract}

\section{Introduction}

The Kolmogorov–Arnold Network (KAN) \citep{liu2024kan} is a neural architecture grounded in the Kolmogorov–Arnold representation theorem \citep{kolmogorov2009representation}, which states that any multivariate continuous function can be expressed as a superposition of univariate continuous functions and addition. The original KAN leverages B-splines to model these univariate components due to their strong approximation capabilities and lack of spectral bias \citep{wang2024expressiveness}. However, computing B-spline bases using De Boor’s iteration and the necessary domain rescaling during training introduces significant computational overhead. To mitigate this, recent works, e.g., FastKAN \citep{li2024fastkan} and FasterKAN \citep{Athanasios2024} have explored alternative basis functions, such as Gaussian radial basis functions (RBFs) and reflectional switch activation functions (RSWAF), to reduce costs while retaining accuracy. Further improvements in stability and accuracy have been achieved through adaptive meshing in FreeKnots-KAN \citep{zheng2025free}, which learns the optimal knot placement.
These modifications can be viewed as extensions of RBF networks \citep{orr1996introduction}, which replace traditional sigmoidal nonlinearities with RBFs. While RBF networks are universal function approximators, they typically struggle with the curse of dimensionality, as the number of grid centers 
required to
achieve uniform approximation accuracy 
grows exponentially with input dimension. In contrast, RBF-KAN leverages univariate RBF basis functions within the superpositional structure of the Kolmogorov–Arnold theorem, enabling scalable function approximation in high-dimensional settings.

In this paper, we propose Free-RBF-KAN, which adopts a hierarchical, multichannel structure integrating adaptive learning grids (free knots) with tunable RBF shape parameters. This architecture constrains the grid to a fixed domain while learning a mesh aligned with activation patterns, balancing computational efficiency and expressiveness. Each RBF kernel acts as a univariate component whose superposition constructs the multivariate function. By permitting dynamic repositioning of grid points and smoothness during training, Free-RBF-KAN decouples the mesh from fixed uniform structures. Furthermore, our approach aligns with findings that spline-based KANs can serve as preconditioning for multichannel 
multilayer perceptrons (MLPs), leading to improved optimization landscapes and faster convergence \citep{actor2025leveraging}.

As a theoretical contribution, we formally extend the universal approximation theorem for RBF networks to the RBF-KAN architecture. To our knowledge, this is the first universality proof for the RBF-KAN family. Unlike B-spline KANs, RBF-KAN exhibits an approximation error bound independent of the target function and does not require a predesigned decomposition. This inherent universality demonstrates that RBF-KAN is a fundamentally powerful framework rather than just a strategy for reducing overhead. We also analyze the Neural Tangent Kernel (NTK) of Free-RBF-KAN, confirming that it does not exhibit spectral bias.

Finally, we evaluate Free-RBF-KAN on high-dimensional regression, physics-informed neural networks (PINNs), and operator learning. Using a physics-informed loss \citep{raissi2019a}, we solve heat conduction and  Helmholtz PDEs. Notably, the 
PINN baseline fails to converge on the heat equation and produces larger errors on the Helmholtz benchmark, despite using two orders of magnitude more parameters. We also evaluate Free-RBF-KAN as the trunk network within a DeepONet \citep{lu2019deeponet} for learning solution operators of reaction–diffusion PDEs. Free-RBF-KAN achieves lower approximation error than standard DeepONets and DeepOKANs while requiring fewer parameters. Across all experiments, Free-RBF-KAN consistently outperforms both RBF-KAN and the original KAN.

\subsection{Main Contributions}
The central contribution of this work is the development of Free-RBF-KAN, a novel and highly efficient KAN architecture utilizing a flexible RBF formulation.  Our work provides the following specific advancements:

\begin{itemize}
    \item \textbf{Architectural Innovation:}  Free-RBF-KAN is based on a free RBF formulation that leverages adaptive meshing (centroids) and tunable sharpness factors. This innovation grants the activation functions enhanced flexibility, enabling a dynamic alignment of the mesh representation with activation patterns to improve accuracy without increasing computational complexity.
    \item \textbf{Theoretical Foundation:} We formally extend the RBF network universality approximation theorem to the RBF-KAN family of neural networks. Furthermore, an NTK analysis confirms that Free-RBF-KAN exhibits the desirable property of lacking spectral bias in regression, akin to the original KAN.    
    \item \textbf{Broad Application and Efficiency:} We demonstrate the scalability of Free-RBF-KAN across diverse regimes, including general regression problems, physics-informed machine learning, and operator learning. Physics-informed Free-RBF-KAN and Free-RBF-KAN-ONet achieves comparable or superior accuracy to the original PINN and DeepONet, while using a smaller  number of parameters and being clearly superior to  RBF-KAN and KAN variants.
\end{itemize}

\section{Related Work}

A wide range of studies have explored both theoretical developments and applications of KAN and RBF frameworks. Physics-informed RBF networks using Gaussian kernels have demonstrated superior performance on high-frequency PDEs \citep{bai2023physics}, while \citet{zeng2024rbf} proposed RBF-PINN as an alternative to Fourier embeddings.
\citet{ss2024chebyshev} introduced Chebyshev basis functions within KAN, while \citet{wang2024kolmogorov} employed cubic B-splines to encode physical laws in strong, energy, and inverse forms. Hybrid models such as BSRBF-KAN \citep{ta2024bsrbfkan} combine B-splines and RBFs to harness the benefits of both smooth local representation and adaptive flexibility. \citet{krisnawan2025rbf} combines KAN and RBF networks for accurate indoor localization using RSSI-based fingerprinting. \citet{bai2023physics} demonstrated that physics-informed RBF networks can outperform traditional PINNs, using a single RBF layer. Similarly, \citet{ta2024bsrbfkan} combined B-spline and RBF bases to enhance training, while \citet{shukla2024comprehensive} and \citet{abueidda2025deepokan} introduced DeepOKAN, 
which employs RBF-KAN for operator learning. \citet{li2024fastkan} demonstrates that B-Spline can be approximated by RBFs with Gaussian kernels. A thorough comparison of PINN and KAN-based models is provided by \citet{shukla2024comprehensive}. 

On the theoretical side, classical results on RBF networks include their universal approximation capabilities \citep{park1993approximation, ismayilova2024universal}, convergence and consistency properties \citep{xu1994radial}, and optimal approximation results \citep{girosi1990networks}. RBF networks have also been used for solving multiscale PDEs \citep{wang2023solving}. Earlier work by \citet{wettschereck1991improving} focused on learning RBF centers to improve performance. Connections between RBFs and kernel machines were explored by \citet{que2016back}, while \citet{chen2024gaussian} extended KAN to a Gaussian process formulation that provides error estimates. Theoretical refinements of Kolmogorov’s representation theorem include the improvement by \citet{fridman1967improvement}, a constructive proof by \citet{braun2009constructive}, and an extension by \citet{kuurkova1991kolmogorov} showing that the representation can be realized through affine and sigmoidal functions--permitting the approximation of discontinuous but bounded functions. This theory underpins recent developments such as the KKAN model proposed in \citet{toscano2025kkans}, which builds on the Kurkova–Kolmogorov–Arnold representation. A broader perspective on KAN developments is provided by \citet{somvanshi2024survey}, highlighting applications across scientific computing, time-series forecasting, and graph learning.

\section{Fundamentals}

\subsection{Kolmogorov-Arnold Theorem}

KAN \citep{liu2024kan} is inspired by the Kolmogorov-Arnold representation theorem \citep{kolmogorov2009representation}. This theorem states that 
every multivariate continuous function 
$f\colon [0,1]^{n}\to \mathbb {R}$ can be represented as a superposition of continuous univariate functions.
However, the superposition may involve non-smooth inner functions and is hard for exact representation \cite{girosi1989representation}.

\subsection{Radial Basis Functions (RBFs)}
RBFs are real-valued functions whose output depends solely on the distance from a central point. 
RBF networks are widely used in interpolation, approximation theory, and machine learning. Classical results \citep{park1991universal}
show that
RBFs are universal approximators of continuous functions on compact domains. 
In the one-dimensional setting, 
a single-layer RBF network takes the form 
\begin{equation}
    g(x) \,=\, \sum_{m=1}^{G} \omega_m K\left(\frac{x-c_m}{\sigma_m}\right),
    \label{eq:rbf}
\end{equation}
where $G\in \mathbb{N^+}$ is the number of nodes 
and kernel $K:\mathbb{R} \rightarrow \mathbb{R}^+$ depends only on $|x-c_m|$.
Each term of \eqref{eq:rbf} 
is parameterized by a weight
$\omega_m$, a centroid $c_m$, and a smoothness factor $\sigma_m > 0$.
In this work, we focus on one-dimensional RBFs as the fundamental 
building blocks for 
approximating multivariate functions through 
the superpositional structure provided by
the Kolmogorov–Arnold Theorem.

\section{Free RBF-KAN}
Assuming that the target function 
is sufficiently smooth, KAN employs B-spline basis functions for functions $\phi^{(p,q)}$ in \cref{lemma:KART}. 
Its expressivity derives from a multilayer architecture, an extension not present in the original formulation of \citet{kolmogorov2009representation} but one that substantially improves practical performance.
RBF-KAN preserves the  structure of \citet{liu2024kan}, while replacing the B-splines with one-dimensional RBFs. Let $n_l$ denote the number of nodes in the $l$-th layer and $\v{x}^{(l)}$ the inputs to the layer. A multilayer RBF-KAN  satisfies the following recursive relation:
\begin{equation}
\v{x}^{(l+1)} \,=\, 
    \Phi^{(l)}\left(\v{x}^{(l)}\right) \,=\, \begin{bmatrix}
    \displaystyle
    \sum_{j=1}^{n_l}
    \displaystyle
    \sum_{m=1}^{G} \omega^{(l)}_{1jm} \ 
    K\left( \frac{x_{j}^{(l)} - c^{(l)}_{1jm}}
                 {\sigma^{(l)}_{1jm}}\right) \\
    \vdots\\
    \displaystyle
    \sum_{j=1}^{n_l}
    \displaystyle
    \sum_{m=1}^{G} \omega^{(l)}_{n_{l+1},jm} \ 
    K\left( \frac{x_{j}^{(l)} - c^{(l)}_{n_{l+1},jm}}
                 {\sigma^{(l)}_{n_{l+1},jm}}\right) 
    \end{bmatrix}\,,
    \label{eq:rbfkan}
\end{equation}
{where $K:\mathbb{R}\to\mathbb{R}^+$ is an RBF kernel that is
assumed to be uniformly continuous.} As defined previously, with 
centroid $c_m \in \mathbb{R}$ and smoothness parameter $\sigma_m >0$, 
a common choice for $K$ is the Gaussian kernel: 
\begin{equation}
\label{eq:gaussian}
K\left(\frac{x-c_m}{\sigma_m}\right) \,=\, \exp\left(\frac{(x-c)^2}{\sigma}\right),
\end{equation}
for which $K \in C^{\infty}(\mathbb{R})$.
Another widely used option is the Mat\'ern kernel with 
smoothness parameter, such as $\nu=5/2$
which satisfies $K \in C^{3}(\mathbb{R})$.  
The smoothness of the chosen kernel  directly 
determines the smoothness of the resulting RBF-KAN.
In this work, we investigate trainable centroids and smoothness 
parameters for additional flexibility and better 
representation quality.
Throughout the paper, {RBF-KAN} refers to networks 
with fixed centroids and smoothness parameters,
whereas {Free-RBF-KAN} denotes models where 
both centroids and smoothness factors are learnable.

In physics-informed machine learning, residual connections through nonlinear activations and scaling have been shown to improve accuracy \citep{liu2024kan}. 
Following the approach of \citet{wang2024kolmogorov}, 
we introduce an optional scaling matrix and a nonlinear activation
into each layer. The output of layer $l$ is then given by 
\begin{equation}
    \v{x}^{(l+1)} \,=\, \rho_o\left(\m{W}_{rbf} \odot \Phi^{(l)}(\v{x}^{(l)})  + \m{W} \odot \rho(\v{x}^{(l)})\right)
\end{equation}
where $\rho_o$ and $\rho: \mathbb{R} \to \mathbb{R}$ are componentwise activation functions. For the hidden layers, $\rho$ is  SiLU activation, following \cite{liu2024kan}, and $\rho_o$ is the sigmoid nonlinearity, which generally performs better than tanh in this architecture, except for the final output layer, where $\rho_o$ is the identity function.
The parameters of the RBF-KAN network are summarized in Table~\ref{tbl:param}. 

\begin{table}[h]
    \centering 
    \tiny
    \begin{tabular}{ccl}
    Trainable Param. & \#Param & \multicolumn{1}{c}{Description}\\
    \hline
    $\omega^{(l)}_{ijm}$ & $G \times n_{l+1} \times n_l$ & Weight of a radial basis function\\
    $c^{(l)}_{ijm}$ & $G \times n_{l+1} \times n_l$ & Centroid of a radial basis function\\
    $\sigma^{(l)}_{ijm}$ & $G \times n_{l+1} \times n_l$ & Smoothness of a radial basis function\\
    $W_{rbf}$ & $n_{l+1}\times n_l$ & Scaling factors of activation function $\phi$\\ 
    $W$ & $n_{l+1} \times n_l$ & Scaling factors of activation function $\sigma$\\
    \hline
    \end{tabular}
    \caption{Trainable parameters in RBF-KAN.}
    \label{tbl:param}
\end{table}
\vspace{-0.5cm}
\subsection{Adaptive Meshing}

Free-knots methods for KAN have been explored in \citet{zheng2025free} and \citet{actor2025leveraging}. 
In this work, we extend the underlying idea to the RBF-KAN setting. 
Unlike B-splines which require maintaining a strictly ordered sequence
of grid points, 
RBFs introduce additional flexibility of allowing any order of centroids, as
each kernel is evaluated independently and does not
rely on a prescribed ordering of centroids.
This allows us to develop a moving grid method in the RBF framework that can adaptively remesh the activation functions during training. 
It is worth noting that the free-knot adaptation for B-splines is 
considerably more complicated and 
the smoothness of the B-splines, determined by their polynomial orders, is fixed and
cannot be treated as a trainable parameter. 
In contrast, RBFs enable both the centroid locations and the associated 
smoothness parameters to be learned directly and efficiently, yielding a
more expressive and computationally efficient approach.

To obtain an efficient representation, we constrain the grids to 
lie within a prescribed domain during training. 
This can be achieved by reparameterization using 
a bounded monotonic activation function $\rho: \mathbb{R} \to [a,b]$ with $a<b$ and $\rho \in C^{\infty}$. Given a grid domain $\Omega_g \in (x_l,x_r)$ 
with $x_l < x_r$, a free parameter $\tilde{c} \in \mathbb{R}$ is mapped to 
a valid centroid location 
$c \in (x_l, x_r)$ via %
 $c = x_l + \frac{x_r - x_l}{b-a}\left(\rho(\tilde{c}) - a\right)$.
Unless otherwise specified, we set $a=-1, b=1$, and choose $\rho$ to be the tanh function,
ensuring that the centroid $c$ remains within $\Omega_g$.
This reparameterization is smooth and compatible with 
gradient-based optimization. For initialization, the grid points are 
placed uniformly within the domain. 

\subsection{Adaptive Smoothness}
The smoothness of RBFs can be specified
by the order parameter $\nu$ in the case of the Mat\`{e}rn kernel, 
or more generally
by the smoothness factor $\sigma$. 
To ensure that $\sigma$ remains positive during 
gradient-based optimization, 
we introduce an unconstrained parameter $\tilde{\sigma} \in \mathbb{R}$ 
and define $\sigma$ via the mapping
$\sigma = \exp(\tilde{\sigma})$.
%
%
The combination of adaptive meshing and adaptive smoothness 
can significantly enhance the express power of Free-RBF-KAN. 
As demonstrated in \citet{zheng2025free}, 
allowing the RBF centroids to move within an
extended range 
improves gradient smoothness and helps training stability.
Although the introduction of additional trainable 
parameters  moderately increases the training cost, 
Free-RBF-KAN remains substantially faster than B-spline KAN. 
Moreover, the inference cost of Free-RBF-KAN is 
identical to
that of standard RBF-KAN once the centroids and smoothness parameters have 
been trained and fixed.
 
\subsection{Universal Approximation \label{sec:uniform}}
To establish the universal approximation
property 
of the Free-RBF-KAN architecture, 
we rely on several classical  
results from approximation theory.
We begin by recalling the Kolmogorov–Arnold representation theorem, which provides a canonical decomposition of 
multivariate continuous functions into sums of compositions of univariate functions.
\vspace{-\topsep}
\begin{lemma}[Kolmogorov-Arnold Representation Theorem \citep{kolmogorov2009representation, arnol1957functions}]
\label{lemma:KART}
For any continuous multi-variable function $f: [0,1]^d \to \mathbb{R}$, there exist $2d+1$ continuous univariate functions $\Phi^{(q)}: \mathbb{R} \to \mathbb{R}$ and $d(2d+1)$ continuous univariate functions $\phi^{(pq)}: [0,1] \to \mathbb{R}$ such that:
\begin{equation}
f(x_1, \dots, x_d) = \sum_{q=1}^{2d+1} \Phi^{(q)} \left( \sum_{p=1}^{d} \phi^{(pq)}(x_p) \right).
\label{eq:KART}
\end{equation}
\end{lemma}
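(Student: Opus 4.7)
The plan is to follow the classical Lorentz-style proof. The key structural observation is that the inner functions $\phi^{(pq)}$ can be taken to be \emph{universal} (independent of $f$), so the problem splits into two tasks: (i) construct strictly increasing continuous inner functions with a strong geometric separation property, and (ii) for each continuous $f$, produce the outer functions $\Phi^{(q)}$ by a convergent greedy iteration. Separating the task this way avoids having to construct $\phi^{(pq)}$ and $\Phi^{(q)}$ simultaneously, which is the root of the difficulty in naive attempts.

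For task (i), I would first build, for each $q = 1, \dots, 2d+1$, a family of coverings of $[0,1]^d$ by small closed cubes $C^{(q)}_\alpha$, obtained as rational translates of a fixed grid. The crucial combinatorial fact, due to Kolmogorov, is that the translation parameters can be chosen so that every $\v{x}\in[0,1]^d$ lies in the interior of at least $d+1$ of the $2d+1$ coverings. I would then construct strictly increasing continuous $\phi^{(pq)}:[0,1]\to\mathbb{R}$ so that the address map $\xi_q(\v{x}) = \sum_{p=1}^d \phi^{(pq)}(x_p)$ sends distinct cubes $C^{(q)}_\alpha$ to disjoint intervals in $\mathbb{R}$. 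A convenient realization is to take each $\phi^{(pq)}$ as a perturbation of the identity with carefully placed plateaus at a dense sequence of rational breakpoints, in the spirit of Sprecher's explicit construction.

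For task (ii), given $f\in C([0,1]^d)$, I would define $\Phi^{(q)}_1\in C(\mathbb{R})$ to take value approximately $f(\v{x}_\alpha)/(2d+1)$ on each image interval $\xi_q(C^{(q)}_\alpha)$ and to be continuous (say, piecewise linear) between these intervals. Combining the ``at least $d+1$ out of $2d+1$'' covering property with the uniform continuity of $f$ and a judicious choice of cube size relative to the modulus of continuity, one obtains a contraction estimate
\begin{equation}
\left\| f \,-\, \sum_{q=1}^{2d+1} \Phi^{(q)}_1 \circ \xi_q \right\|_\infty \;\leq\; \theta \,\|f\|_\infty
\end{equation}
for some explicit $\theta<1$ depending only on $d$. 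Applying the same construction to the residual and iterating yields a geometrically convergent series whose sums $\Phi^{(q)} = \sum_{k\geq 1} \Phi^{(q)}_k$ are continuous by uniform convergence and represent $f$ as in \eqref{eq:KART}.

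The main obstacle is task (i): one must simultaneously secure (a) continuity and strict monotonicity of $\phi^{(pq)}$, (b) the multiplicity property that at least $d+1$ of the coverings cover every $\v{x}$, and (c) disjointness of the images $\xi_q(C^{(q)}_\alpha)$ for $\alpha\neq \alpha'$. Properties (b) and (c) pull in opposite directions, since finer covers aid (c) but require more coverings to maintain (b), and reconciling them depends on the pigeonhole argument that exactly $2d+1$ coverings suffice — the combinatorial heart of Kolmogorov's original insight. Since the statement is a classical result, I would simply cite \cite{kolmogorov2009representation, arnol1957functions} for the details rather than reproduce the full combinatorics.
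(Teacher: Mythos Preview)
Your outline of the Lorentz-style proof is correct and captures the standard argument: universal inner functions built from shifted coverings with the ``at least $d+1$ of $2d+1$'' multiplicity property, followed by an iterative contraction to produce the outer functions. The paper, however, does not prove this lemma at all --- it is stated purely as a classical result with citations to \cite{kolmogorov2009representation, arnol1957functions} and then invoked as a black box in the proof of Theorem~\ref{thm:np-kan}. Your proposal is therefore strictly more detailed than what the paper offers, and your closing remark (to cite the originals rather than reproduce the combinatorics) matches the paper's treatment exactly.
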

We next recall a classical density result for ridge-function approximation.
\begin{theorem}[Pinkus Theorem \citep{Pinkus_1999}]
\label{theo:pinkus}
Let $\sigma \in C(\mathbb{R})$. The set
\begin{equation}
    \mathcal{M}(\sigma)=\mathrm{span}\{\sigma(\mathbf{w}\cdot \mathbf{x}-\theta)\; : \; \theta \in \mathbb{R}, \mathbf{w} \in \mathbb{R}^n\}
\end{equation}
is dense in $C(\mathbb{R}^n)$ on compact sets
with respect to uniform convergence, if and only if $\sigma$ is not a 
polynomial.
\end{theorem}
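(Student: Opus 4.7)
The plan is to split the claim into its easy and hard directions. Necessity is immediate: if $\sigma$ is a polynomial of degree $k$, each $\sigma(\mathbf{w}\cdot\mathbf{x}-\theta)$ is a polynomial in $\mathbf{x}$ of total degree at most $k$, so $\mathcal{M}(\sigma)$ lies in a finite-dimensional subspace of $C(\mathbb{R}^n)$ and cannot be uniformly dense on any compact set with nonempty interior.

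For sufficiency I would first reduce to the univariate case. Once one knows that $\overline{\mathrm{span}\{\sigma(\alpha t-\beta):\alpha,\beta\in\mathbb{R}\}}=C(\mathbb{R})$ uniformly on compacta whenever $\sigma$ is non-polynomial, the $n$-variable version follows from the classical density of ridge functions: $\mathrm{span}\{g(\mathbf{w}\cdot\mathbf{x}):\mathbf{w}\in\mathbb{R}^n,\ g\in C(\mathbb{R})\}$ is dense in $C(\mathbb{R}^n)$ on compact sets. I would prove this auxiliary density by noting that, for each $k$, the ridge monomials $(\mathbf{w}\cdot\mathbf{x})^k$ span every homogeneous polynomial of degree $k$ in $\mathbf{x}$ as $\mathbf{w}$ varies, so ridge functions recover all multivariate polynomials and the multivariate Weierstrass theorem closes the argument; composing with the univariate approximation of each profile $g$ by translates and dilates of $\sigma$ then yields density of $\mathcal{M}(\sigma)$.

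The core of the proof is therefore the one-dimensional statement, which I would establish in two steps. First, reduce to smooth $\sigma$ by mollification: $\sigma\ast\phi_\varepsilon$ is a limit of Riemann sums of shifts of $\sigma$, so working with the mollified kernel does not enlarge the closure, and non-polynomiality of $\sigma$ is preserved under a generic choice of $\phi_\varepsilon$. Second, for $\sigma\in C^\infty$, apply divided differences in the parameter $\alpha$ at $\alpha=0$ to show that
\[
t^k\,\sigma^{(k)}(-\beta)\ \in\ \overline{\mathrm{span}\{\sigma(\alpha t-\beta):\alpha\in\mathbb{R}\}}
\]
for every $k\geq 0$ and every $\beta\in\mathbb{R}$. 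Since $\sigma$ is not a polynomial, for each fixed $k$ there exists some $\beta_k$ with $\sigma^{(k)}(-\beta_k)\neq 0$; hence every monomial $t^k$ lies in the closure, and Weierstrass gives full density in $C(\mathbb{R})$ on compacta.

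The hard part will be the non-vanishing assertion: a non-polynomial $\sigma\in C^\infty$ must admit, for each $k$, some point at which its $k$-th derivative is nonzero. Equivalently, no individual derivative $\sigma^{(k)}$ may vanish identically, for then $\sigma$ would be a polynomial of degree less than $k$. The statement is clean once the quantifier order is pinned down (choose $\beta$ after fixing $k$, not uniformly in $k$), but it must be complemented by a careful mollification reduction that simultaneously preserves smoothness of the approximant and the non-polynomial hypothesis; a Baire-type argument on the closed sets $E_k=\{\beta:\sigma^{(k)}(\beta)=0\}$ handles the sticky case. Everything else---convergence of the divided differences, the ridge-polynomial identity, and the invocations of Weierstrass---is routine.
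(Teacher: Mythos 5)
You should note first that the paper does not prove this statement at all: it quotes Pinkus' theorem from the cited survey and uses it only as a black box to obtain the univariate density lemma. So the relevant comparison is with the classical proof (Leshno--Lin--Pinkus--Schocken, as reproduced in Pinkus 1999), and your outline follows exactly that route: necessity because a polynomial activation of degree $k$ generates only polynomials of degree at most $k$, a finite-dimensional (hence closed and proper) subspace of $C(K)$ for any compact $K$ with nonempty interior; sufficiency by reducing to $n=1$ via the density of ridge polynomials (the span of $(\mathbf{w}\cdot\mathbf{x})^k$ over $\mathbf{w}$ is all homogeneous polynomials of degree $k$, then Weierstrass), and the univariate case via mollification plus divided differences in the scale parameter, using $t^k\sigma^{(k)}(-\beta)\in\overline{\mathrm{span}}\{\sigma(\alpha t-\beta)\}$ and choosing $\beta$ after fixing $k$. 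That quantifier discipline is correct, and at that point no Baire argument is needed: if $\sigma^{(k)}\equiv 0$ then $\sigma$ is already a polynomial.

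The one genuine soft spot is the mollification step, and your proposed fix aims at the wrong place. The assertion that ``non-polynomiality of $\sigma$ is preserved under a generic choice of $\phi_\varepsilon$'' is precisely the nontrivial lemma, and a Baire argument on $E_k=\{\beta:\sigma^{(k)}(\beta)=0\}$ cannot supply it: those sets are defined through derivatives of $\sigma$, which need not exist for a merely continuous $\sigma$ (the very reason one mollifies), while in the smooth case you have already observed that no such argument is required. The standard repair runs the Baire argument in the other variable: fix $a>0$, work in the complete metric space $C_c^\infty([-a,a])$ with its usual Fr\'echet metric, and note that $V_m=\{\phi:\sigma\ast\phi \text{ is a polynomial of degree} \le m\}$ is a closed linear subspace. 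If $\sigma\ast\phi$ were a polynomial for every test function $\phi$, Baire would force some $V_m$ to have nonempty interior, hence $V_m$ equals the whole space; then $(\sigma\ast\phi)^{(m+1)}=\sigma\ast\phi^{(m+1)}=0$ for all $\phi$, so $\sigma^{(m+1)}=0$ distributionally and $\sigma$ is a polynomial of degree at most $m$, contradicting the hypothesis. With that lemma inserted, some $\sigma\ast\phi$ is smooth, non-polynomial, and lies in the closure of the span of shifts and dilates of $\sigma$, and the rest of your sketch (uniform convergence of Riemann sums and divided differences on compacta, the ridge-function reduction, Weierstrass) is routine and correctly placed.
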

Restricting Theorem~\ref{theo:pinkus} to the one-dimensional setting yields a
useful univariate density result.
%
\begin{lemma}[Univariate Density \citep{leshno1993multilayer}]
\label{lemma:Leshno}
Let $K: \mathbb{R} \to \mathbb{R}$ be a continuous function. The set of functions spanned by the shifts and scales of $K$, specifically 
\begin{equation}
\label{eq:S}
\mathcal{S} = \mathrm{span} \left\{ K\left(\frac{x-c}{\sigma}\right) \; : \; c, \sigma \in \mathbb{R}, \ \sigma \neq 0 \right\},
\end{equation}
is dense in the space of continuous functions $C[a, b]$ for any compact interval $[a, b] \subset \mathbb{R}$ if and only if $K$ is not a polynomial.
\end{lemma}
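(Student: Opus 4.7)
The plan is to derive Lemma~\ref{lemma:Leshno} as the one-dimensional specialization of Theorem~\ref{theo:pinkus}. The key technical point is a change of variables that relates the scale--shift parameterization $(c,\sigma)$ with $\sigma\neq 0$ used in $\mathcal{S}$ to the weight--bias parameterization $(w,\theta)$ appearing in Pinkus' theorem, followed by a short continuity argument that bridges the single degenerate case $w=0$ that $\mathcal{S}$ excludes.

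First, I would set $n=1$ in Theorem~\ref{theo:pinkus} to conclude that
\begin{equation*}
\mathcal{M}(K) \;=\; \mathrm{span}\{K(wx-\theta) : w,\theta\in\mathbb{R}\}
\end{equation*}
is dense in $C(\mathbb{R})$ on compact sets if and only if $K$ is not a polynomial. Next, I would make the substitution $w=1/\sigma$, $\theta=c/\sigma$, which yields $K(wx-\theta)=K((x-c)/\sigma)$ and is a bijection between $\{(c,\sigma):c\in\mathbb{R},\,\sigma\neq 0\}$ and $\{(w,\theta):w\neq 0,\,\theta\in\mathbb{R}\}$. Hence $\mathcal{S}$ coincides with the subspace of $\mathcal{M}(K)$ spanned by $\{K(wx-\theta):w\neq 0\}$; the only functions in $\mathcal{M}(K)$ not already in $\mathcal{S}$ are the constants $K(-\theta)$ that arise when $w=0$.

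To finish the ``if'' direction, it is enough to show that these missing constants lie in the uniform closure of $\mathcal{S}$ on $[a,b]$. Since $K$ is continuous and $[a,b]$ is compact, for each fixed $\theta\in\mathbb{R}$ the function $x\mapsto K(wx-\theta)$ converges uniformly on $[a,b]$ to the constant $K(-\theta)$ as $w\to 0$. Therefore the uniform closure of $\mathcal{S}$ in $C[a,b]$ contains the uniform closure of $\mathcal{M}(K)$ restricted to $[a,b]$, which by Theorem~\ref{theo:pinkus} equals $C[a,b]$ whenever $K$ is not a polynomial. For the ``only if'' direction, I would observe that if $K$ is a polynomial of degree $n$, then every shift--scale $K((x-c)/\sigma)$ is a polynomial in $x$ of degree at most $n$, so $\mathcal{S}$ is contained in the finite-dimensional space of polynomials of degree at most $n$ and cannot be dense in $C[a,b]$.

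The only real obstacle is the book-keeping for the excluded degenerate case $\sigma=\infty$ (equivalently $w=0$); the heavy approximation-theoretic work is already carried by Theorem~\ref{theo:pinkus}, so the remaining argument is just a uniform-continuity observation on a compact interval.
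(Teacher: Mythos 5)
Your proposal is correct and follows essentially the same route as the paper, which disposes of this lemma in a one-line remark stating that it follows from Theorem~\ref{theo:pinkus} by restricting to $n=1$ and reparameterizing the affine arguments as shifts and scales. Your write-up supplies the details the paper omits — in particular the observation that the degenerate direction $w=0$ (the constants $K(-\theta)$, which the parameterization $\sigma\neq 0$ excludes) still lies in the uniform closure of $\mathcal{S}$ on $[a,b]$, and the finite-dimensionality argument for the ``only if'' direction — and both steps are sound.
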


\begin{remark}
Lemma~\ref{lemma:Leshno} follows from Theorem~\ref{theo:pinkus}
by restricting to the one-dimensional case $n=1$ and
reparameterizing the affine arguments as shifts and scales.
Lemma~\ref{lemma:Leshno} provides a univariate density result that will be used
to approximate the univariate component functions appearing in the
Kolmogorov--Arnold and KAN-type representations.
\end{remark}

We are now ready to state the universal approximation theorem for 
\emph{non-polynomial KAN} (NP-KAN). We provide a sketch of the proof here, and a full proof in Appendix~\ref{sec:proof}.

\begin{theorem}[Universal Approximation of NP-KAN]
Let $K: \mathbb{R} \to \mathbb{R}$ be a continuous, non-polynomial function. 
For any $f \in C([0,1]^d)$ and any $\varepsilon > 0$, 
there exists an NP-KAN network $g$ of the form
\begin{equation}
    g(x_1, \dots, x_d) = \sum_{q=1}^{2d+1} \widehat{\Phi}^{(q)} \left( \sum_{p=1}^{d} \widehat{\phi}^{(pq)}(x_p) \right),
\end{equation}
where $\widehat{\Phi}^{(q)}, \widehat{\phi}^{(pq)} \in \mathcal{S}$ defined in \eqref{eq:S},
for all $1 \leq p \leq d$ and $1 \leq q \leq 2d+1$, 
such that 
\begin{equation} \label{eq:f-g}
\| f - g \|_{C([0,1]^d)} < \varepsilon \ .
\end{equation}
\label{thm:np-kan}
\end{theorem}
\vspace{-0.9cm}
\begin{proof}
(Sketch, see \cref{sec:proof} for the full proof.) By Lemma \ref{lemma:KART}, any $f\in C([0,1]^d)$ admits decomposition
\eqref{eq:KART} with continuous 
univariate functions
$\Phi^{(q)}$ and $\phi^{(pq)}$. The inner sum $S_q(x)=\sum_p\phi^{(pq)}(x_p)$
has compact range on which $\Phi^{(q)}$ is uniformly continuous.
Since $K$ is continuous and non-polynomial, Lemma \ref{lemma:Leshno} implies that $\mathcal{S}$ is dense in $C[a,b]$ for any compact interval. So, $\forall \varepsilon>0$,  each
$\phi^{(pq)}$
 and  
$\Phi^{(q)}$
 can be uniformly approximated on their respective 
 compact domains by  
 $\widehat\phi^{(pq)}, \widehat\Phi^{(q)}\in\mathcal S$ with errors chosen small enough so that uniform continuity controls the composition error.
Summing the resulting error bounds over $q$ yields \eqref{eq:f-g}.
\end{proof}

\begin{corollary}[Universal Approximation of RBF-KAN]
\label{cor:rbf-kan}
Let $K(x)$ be the Gaussian RBF in~\eqref{eq:gaussian},
which is continuous and non-polynomial.
For any continuous function $f \in C([0,1]^d)$ and any $\varepsilon>0$,
there exists an RBF-KAN network $g$ such that
$
\|f-g\|_{C([0,1]^d)} < \varepsilon.
$
\end{corollary}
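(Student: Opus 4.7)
The plan is to reduce the corollary to a direct invocation of Theorem~\ref{thm:np-kan} together with a verification that the resulting NP-KAN network can be realized as an RBF-KAN network in the sense of \eqref{eq:rbfkan}. First, I would confirm that the Gaussian kernel $K(x) = \exp(-x^2)$ satisfies the hypotheses of Theorem~\ref{thm:np-kan}: continuity is immediate since $K \in C^\infty(\mathbb{R})$, and the non-polynomial property follows from the fact that $K$ is bounded and non-constant on $\mathbb{R}$, whereas any polynomial of positive degree is unbounded and any constant polynomial would contradict, e.g., $K(0)=1 \neq e^{-1}=K(1)$.

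Next, I would apply Theorem~\ref{thm:np-kan} with this choice of $K$ to obtain, for the prescribed $f \in C([0,1]^d)$ and $\varepsilon > 0$, functions $\widehat{\Phi}^{(q)}, \widehat{\phi}^{(pq)} \in \mathcal{S}$ such that the two-layer superposition $g$ from the theorem statement satisfies $\|f - g\|_{C([0,1]^d)} < \varepsilon$. Since each element of $\mathcal{S}$ is by definition a finite linear combination of shifts and scales of $K$, each $\widehat{\phi}^{(pq)}$ and $\widehat{\Phi}^{(q)}$ has exactly the form of a single-layer RBF expansion as in~\eqref{eq:rbf}.

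The remaining step is a bookkeeping identification. I would exhibit an explicit two-layer RBF-KAN of the form~\eqref{eq:rbfkan} with $n_0=d$, $n_1=2d+1$, $n_2=1$, absorbing the RBF expansions of the inner functions $\widehat{\phi}^{(pq)}$ into the weights, centroids, and smoothness factors $(\omega^{(0)}_{qpm}, c^{(0)}_{qpm}, \sigma^{(0)}_{qpm})$ of the first layer, and those of the outer functions $\widehat{\Phi}^{(q)}$ into the parameters $(\omega^{(1)}_{1qm}, c^{(1)}_{1qm}, \sigma^{(1)}_{1qm})$ of the second layer, choosing $G$ large enough to accommodate the maximum expansion length across all components (shorter expansions are padded with zero weights). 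The optional scaling matrices and residual activations can be set to identity/zero so the network collapses onto the pure NP-KAN form.

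The only mild obstacle I anticipate is the last identification step, since~\eqref{eq:rbfkan} uses a common $G$ across all edges; this is purely notational and handled by the padding argument above. Everything else is an immediate consequence of Theorem~\ref{thm:np-kan}.
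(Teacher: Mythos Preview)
Your proposal is correct and matches the paper's approach: the corollary is stated without a separate proof, relying on the observation (made in the corollary's statement itself) that the Gaussian kernel is continuous and non-polynomial, so Theorem~\ref{thm:np-kan} applies directly. Your additional bookkeeping step identifying the NP-KAN with a two-layer RBF-KAN of the form~\eqref{eq:rbfkan} is more explicit than what the paper spells out, but it is entirely consistent with the intended reading.
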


\section{Numerical experiments}

We now turn to numerical experiments to assess the practical performance of the proposed Free-RBF-KAN architecture. 
To ensure a fair and meaningful comparison, we begin by outlining the configurations of the baseline models. 
The KAN used in our comparisons employs cubic B-spline basis functions (i.e., B-splines of order 3). 
All models in the KAN family, including KAN and the various RBF-KAN counterparts, are constructed with the same number of nodes and layers, 
providing a controlled setting for evaluating expressive power and computational efficiency. 
Unless otherwise noted, all RBF-KAN variants, including Free-RBF-KAN, use the Gaussian kernel as their radial basis activation.

\subsection{Functional approximation }
\label{sec:funcapprox}
Our first experiment examines the approximation of a nonsmooth function to illustrate the importance of adaptive meshing. Following \cite{actor2025leveraging}, we consider the function defined for $(x,y) \in (0,1)^2$:
\begin{equation} \notag
f(x,y) = \cos(4\pi x) + \sin(\pi y) + \sin(2\pi y) + |\sin(3\pi y^2)|
\end{equation}
We evaluate a compact neural network architecture with layer sizes [2, 5, 1] across several model variants: MLP, KAN, FreeKnots-KAN, RBF-KAN, and Free-RBF-KAN. 
FreeKnots-KAN~\citep{actor2025leveraging} introduces adaptive remeshing 
of the B-spline grid, enabling improved approximation accuracy 
compared to fixed-grid KANs. 
All models are trained using the LBFGS optimizer for 300 epochs 
with a learning rate of 1 on a regression dataset of 16,384 points 
processed in batches of 1,024.

In \cref{tbl:nonsmooth}, the results show that the RBF-KAN substantially outperforms the standard KAN 
while using fewer parameters. 
Both FreeKnots-KAN and its RBF-based counterpart achieve similarly low test errors; 
however Free-RBF-KAN attains this accuracy with fewer parameters, 
underscoring its efficiency. 
These findings demonstrate that replacing B-splines with RBFs with Gaussian kernels 
not only improves accuracy but also reduces model complexity. 
Overall, RBF-KAN exhibits strong capability in approximating nonsmooth functions, with the Gaussian kernel providing the best performance among RBF variants considered.

\subsection{Spectral bias}

Our next experiment investigates the spectral bias of Free-RBF-KAN.
Unlike MLPs, which are known to exhibit spectral bias, 
KAN has been shown to avoid this limitation \citep{wang2024kolmogorov}. 
We assess whether this property extends to RBF-KAN by conducting 
a spectral bias analysis using the Neural Tangent Kernel (NTK) framework, 
following the methodology in \cite{wang2024kolmogorov}. 
Detailed NTK derivation can be found in \citet[Sec. 4.1]{wang2024kolmogorov} and \citet[Sec. 3.1]{wang2021eigenvector}. 
The NTK analysis is performed on a multiscale regression task: 
\begin{equation}
    f(x) = 0.1\sin(50\pi x) + \sin(2\pi x), \quad x \in [0,1],
\label{eq:f}
\end{equation}
using MLP, KAN, and RBF-KAN variants. 
The input domain is discretized into 100 uniform grid points. 
The MLP architecture consists of 4 hidden layers with 100 neurons each, 
while all KAN variants use 3 hidden layers with 5 neurons. 
KAN employs cubic B-spline activations with 20 grid points, followed by 
a tanh normalization layer. 
RBF-KAN and Free-RBF-KAN use the same grid specifications 
but replace the B-splines with Gaussian RBFs. 
All models are trained using the mean square error (MSE) loss.
All KAN variants successfully learn the multiscale regression problem and outperform the MLP 
baseline, as shown in \cref{fig:func-approx} for the approximated solutions
and in \cref{fig:func-approx-training-loss} for the training loss. 
Moreover, the NTK eigenvalue spectra,
shown in \cref{fig:ntk},
reveal that RBF-KAN maintains a broader and  less rapidly decaying NTK eigenvalue
spectrum comparable to 
that of KAN and MLP, indicating an absence of spectral bias (\cref{fig:ntk-indx}). Free-RBF-KAN exhibits an even wider eigenvalue spectrum after 9000 training steps, suggesting that the additional flexibility provided by adaptive centroids and shape parameters promotes faster convergence.

Overall, the NTK analysis demonstrates that RBF-KAN not only trains more efficiently than the B-spline-based KAN but also retains the ability to represent multiscale features more effectively than MLPs. Furthermore, Free-RBF-KAN offers additional improvements in convergence behavior due to its adaptive smoothness and mesh refinement capabilities.

\begin{figure}[h]
        \includegraphics[width=\linewidth]{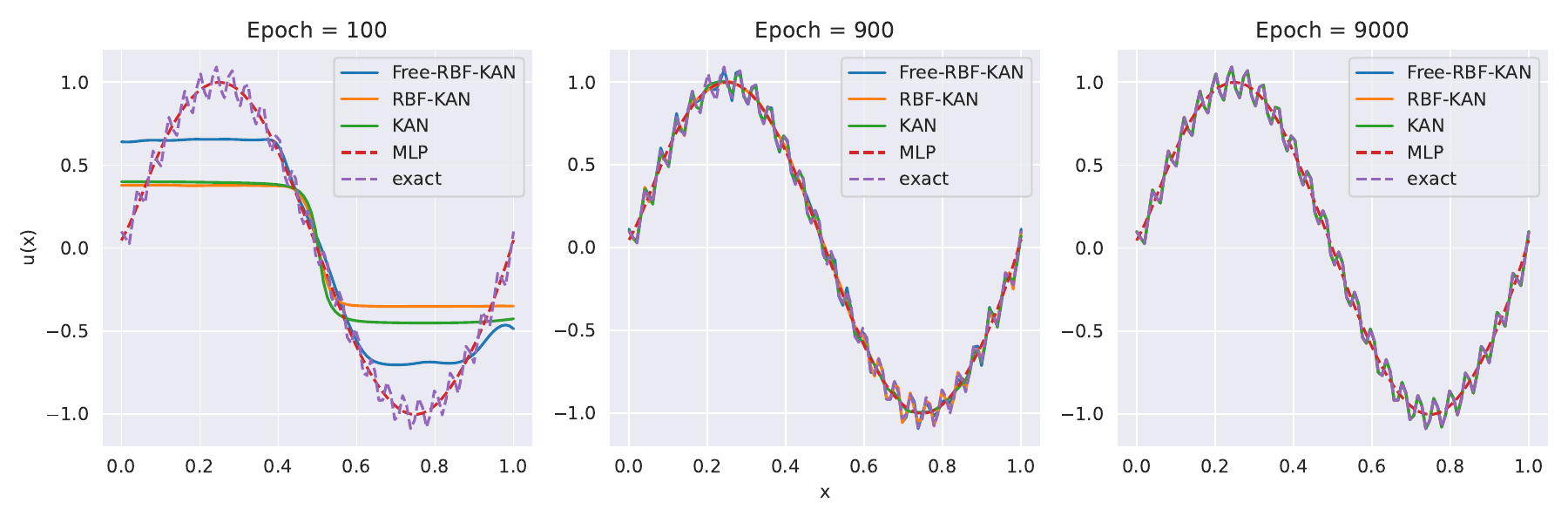}
     \caption{The approximation of $f$ in \eqref{eq:f}
     using  MLP, KAN, RBF-KAN, and Free-RBF-KAN.
     \label{fig:func-approx}}
\end{figure}%
\begin{figure}[h]
\centering\includegraphics[width=0.45\textwidth]{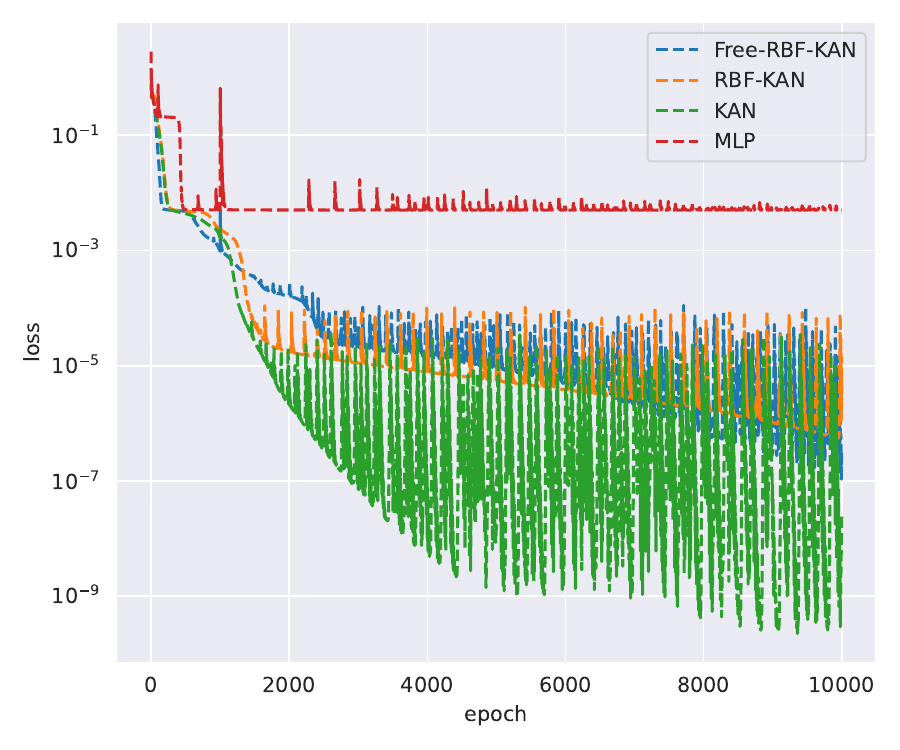}
     \caption{The training loss of approximating $f$ in \cref{eq:f}
     using  MLP, KAN, RBF-KAN, and Free-RBF-KAN.
     \label{fig:func-approx-training-loss}}
\end{figure}%
\begin{figure}[h]
\centering\includegraphics[width=0.45\textwidth]{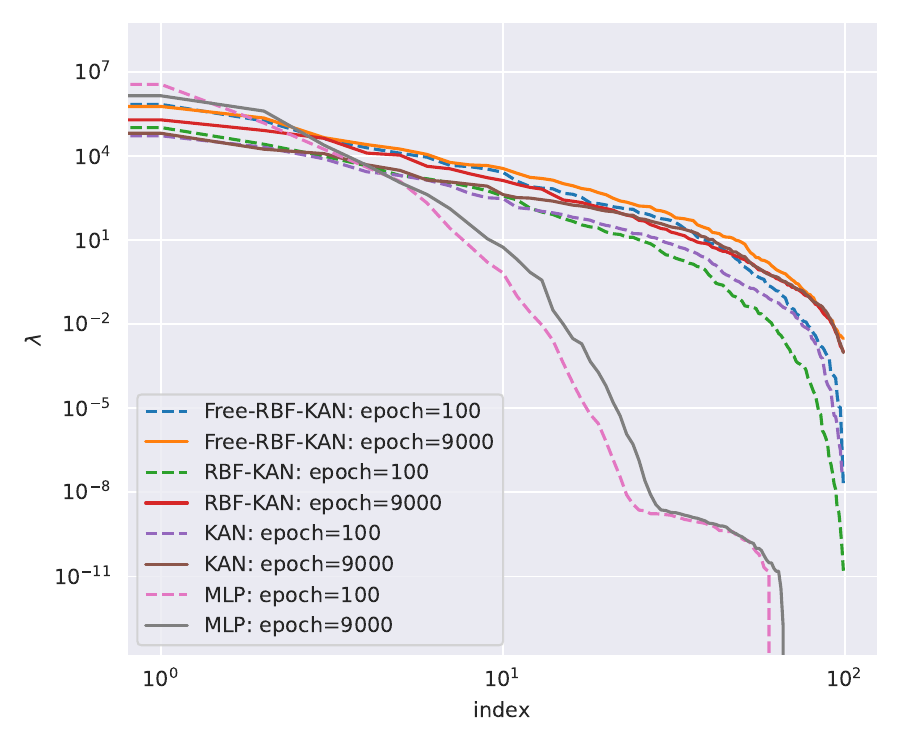}
        \label{fig:ntk-indx}
     \caption{The NTK analysis on the spectral bias of MLP, KAN, RBF-KAN, Free-RBF-KAN in approximating $f$ in \cref{eq:f}}
     \label{fig:ntk}
\end{figure}

\subsection{High-dimensional regression problem}
\label{sec:mnist}
 

\begin{table*}[tp]
    \centering 
    \scriptsize
    \begin{tabular}{lcccccc}
         \multicolumn{7}{c}{\textbf{Regression}}\\
        \hline
        Problem&Model & Layers & Basis & \#Param. & Test MSE & Training time (sec.)\\
        \hline
        \multirow{5}{5em}{\textbf{2D Nonsmooth Function}}
        &MLP & [2,10,10,10,1] & Tanh & 261 & 5.26e-1 & 162\\
        &KAN & [2,5,1] & B-Spline  & 195 & 3.54e-3 & 124\\
        &FreeKnots-KAN & [2,5,1] & B-Spline   & 307 & 3.54e-4 & 136\\
        &RBF-KAN & [2,5,1] & RBF-Gaussian & 120 & 5.67e-3 & 122\\
        &Free-RBF-KAN & [2,5,1] & RBF-Gaussian & 450 & 2.74e-4 & 121\\
        \hline
         &Model& Layers & Basis  & \# Param. & Test loss& Training time (sec.)\\
        \hline
        \multirow{4}{5em}{\textbf{MNIST}}
        &MLP & $[28\times28,64,10]$ & Tanh&  509410 & 6.702e-02&81.58\\
        &KAN & $[28\times28,64,10]$ & B-Spline&  762240 & 1.166e-01&97.95\\
        &RBF-KAN & $[28\times28,64,10]$ &RBF-Gaussian&   508160 & 2.020e-01&82.27\\
        &Free-RBF-KAN & $[28\times28,64,10]$ &RBF-Gaussian& 525120 & 8.789e-02& 85.81\\[0.5ex]
        \hline
        \multicolumn{7}{c}{\textbf{Physics-Informed Machine Learning}}\\
        \hline 
        &Model & Layers & Basis & \# Param. & $L^{2}$-loss &Training time (sec.)\\
        \hline
        \multirow{4}{5em}{\textbf{2D Heat Conduction}}
        &MLP& [2,40,40,40,1] & Tanh & 5081 & 1 & 60\\
        &KAN & [2,5,5,1] &B-Spline&  1400 & 6.52e-3& 267\\
        &RBF-KAN&[2,5,5,1]&RBF-Gaussian&1280&2.78e-3&124\\
        &Free-RBF-KAN&[2,5,5,1]&RBF-Gaussian&2000&2.41e-3& 138\\
        \hline
        \multirow{4}{5em}{\textbf{2D Helmholtz}}
        &MLP& [2,128,128,128,1] & Tanh  & 50049 & 4.15e-2 & 39\\
        &KAN & [2,5,5,1] &B-Spline&  600 & 1.58 & 153\\
        &RBF-KAN&[2,5,5,1]&RBF-Gaussian&400&3.67e-1& 49\\
        &Free-RBF-KAN&[2,5,5,1]&RBF-Gaussian&640&3.35e-2& 62\\
        \hline
        \multicolumn{7}{c}{\textbf{Operator Learning}}\\
        \hline
        &Model & Layers (DeepONet Trunk)  & Basis & \# Param & $L^{2}$-loss & Training time (sec.)\\ 
        \hline 
        \multirow{4}{5em}{\textbf{Reaction Diffusion PDE}}
        &MLP & [2,[40]*4,100] & Tanh & 18921 & 2.08e-2 & 78\\
        &KAN & [2, 4, 4, 4, 100] & B-Spline& 11945 & 6.15e-2 & 96\\
        &RBF-KAN & [2, 4, 4, 4, 100] &RBF-Gaussian& 10625 & 3.7e-2 & 84\\
        &Free-RBF-KAN & [2, 4, 4, 4, 100] &RBF-Gaussian&  11185 & 1.94e-2& 88\\ 
    \end{tabular}
    \vspace{1em}
    \caption{Performance benchmarks across regression, physics-informed, and operator learning tasks. We evaluate FreeKnots-KAN \citep{actor2025leveraging} on 2D nonsmooth function. All B-spline basis functions utilize a third-order smoothness. For operator learning tasks, the DeepONet branch architecture consists of an MLP with [100, [40]*4,100] neurons, evaluated across various trunk configurations.}
    \label{tbl:mnist}
    \label{tbl:heat}
    \label{tbl:helmholtz}
    \label{tbl:nonsmooth}
    \label{tb:deeponet}
\end{table*}

In the next set of experiments, we evaluate model performance on 
a high-dimensional regression task using the MNIST dataset\footnote{\url{https://docs.pytorch.org/vision/main/generated/torchvision.datasets.MNIST.html}}.
This setting is nontrivial for traditional RBF networks, as placing and optimizing
RBF centroids directly in high-dimensional spaces 
notoriously difficult and computationally expensive. 
On the other hand, RBF-KAN leverages a hierarchical architecture composed of 
univariate RBF functions, making it naturally suited to 
deep structures and far more effective at mitigating the curse of dimensionality. 
\todo{Ulisses: I believe that we should study the performance of RBF-KAN against that of KAN and perhaps other KAN variants, but not MLP. There are already many papers that compare KAN and MLP. Several KAN papers only compare KAN variants.}

For both RBF-KAN and KAN, we use 10 grid points for each activation function. 
The implementation of KAN follows the setup in \cite{liu2024kan}. 
To enhance performance, we remove the residual activation and apply a sigmoid function for normalization.
The MNIST data are normalized, and we use a batch size of 64. 
All networks share the same architecture with layer sizes 
$[28\times 28,64,10]$.
Training is performed for 20 epochs using the Adam optimizer with a learning rate of 1e-3.
Test loss is evaluated on a test dataset as shown in \cref{tbl:mnist}.
\todo{[Fixed] RL: what do you mean by "independent dataset" STC: Test dataset}

RBF-KAN exhibits  progressively improved performance when adaptive grids and
adaptive smoothness are incorporated. 
Prior work \citep{yu2024kan} has reported that KAN performs worse than MLP on the MNIST dataset. 
While RBF-KAN still lags behind MLP in accuracy and requires longer training time, 
Free-RBF-KAN achieves better accuracy than the standard KAN, 
and substantially narrows the performance gap to MLP. 
In terms of the training time, the additional flexibility of Free-RBF-KAN 
does not introduce noticeable overhead compared to  KAN, 
yet it consistently yields improved accuracy. 
These results demonstrate that the adaptivity in both the grid points and the kernel smoothness
can enhance the model performance without significantly compromising computational efficiency. 
Among all the KAN-based methods, RBF-KAN achieves the fastest 
training speed but also the lowest accuracy. As observed in prior 
work \cite{yu2024kan}, all KAN variants remain less competitive than MLP in terms of 
parameter efficiency, accuracy, and training time. 
Nonetheless, Free-RBF-KAN provides clear improvements in both accuracy and efficiency compared to RBF-KAN and standard KAN.

\subsection{Physics-Informed Machine Learning} \label{sec:piml}

We evaluate RBF-KAN within the PINN framework~\citep{raissi2019a}. To approximate the solution $u$ of a PDE $\mathcal{N}[u]=f$ in domain $\Omega$ subject to boundary conditions $\mathcal{B}[u]=g$ on $\partial\Omega$, we minimize the composite residual loss %
where derivatives are computed via automatic differentiation. We demonstrate this approach on two benchmarks: 2D heat conduction with high-frequency forcing and the 2D Helmholtz equation with smooth solution.
\subsubsection{Heat Conduction in 2D} \label{sec:heat}

The heat conduction problem from \citet{wang2021eigenvector} has been shown 
to be more effectively learned by
in KAN than by MLP \citep{wang2024kolmogorov}. 
To evaluate whether the proposed RBF-KAN 
can also handle multiscale physics-informed problems, 
we compare RBF-KAN and Free-RBF-KAN against MLP and the original KAN.
The governing equation is given by 
\begin{equation}
    \begin{aligned}
        u_t &= \frac{1}{(50\pi)^2} u_{xx}, & & \quad (x, t) \in (0,1) \times (0,1),\\
        u(x,0) &= \sin(50\pi x),  & & \quad x\in [0,1],\\
        u(x,t) &=0, & & \quad (x,t)\in\{0,1\} \times [0,1], 
    \end{aligned}
\end{equation}
and its analytical solution is given by %
%
$u(x,t) = e^{-t} \sin(50 \pi x)$.
%
We randomly sample 4000 interior collocation points and 200 boundary points 
for each boundary segment.  
The MLP baseline has 4 hidden layers with width 40, while all KAN variants 
use an architecture with layer sizes [2,5,5,1]  and 30 grid points per activation. 
The KAN model uses cubic B-spline activations, and the grid range of KAN variations is set as $(x_l,x_r)=(0,1)$, same as \citet{wang2024kolmogorov}. All models are trained using the 
Adam optimizer with a learning rate $10^{-3}$ and an exponential learning 
rate scheduler with $\gamma=0.999$ applied each epoch. 
Training is performed for 15,000 epochs.

The results presented in \cref{fig:heat}
show that both RBF-KAN and Free-RBF-KAN can 
successfully 
learn this high frequency problem. 
While RBF-KAN attains slightly lower accuracy than KAN, it trains in 
roughly half the time required by KAN. 
Incorporating adaptive grids and smoothness
further improves the performance:
Free-RBF-KAN achieves higher  accuracy than both KAN and RBF-KAN. 
Although the additional flexibilities in Free-RBF-KAN increase 
the training time slighly, Free-RBF-KAN remains substantially faster than KAN, 
as shown in \cref{tbl:heat}.

\subsubsection{Helmholtz  Equation in 2D} \label{sec:helm}


To further demonstrate the benefits of adaptive meshing, we examine physics-informed learning for a 2D 
Helmholtz equation with Dirichlet boundary conditions, following the setup in 
\citet{wang2020understanding}. The problem is defined 
in the domain $\Omega = (-3,3)^2$ as
\begin{equation}
\begin{split}
    u_{xx} + u_{yy} + u &= q \quad \text{ in } \Omega,\\
    u&=0 \quad \text{ on } \partial\Omega,
\end{split}
\label{eq-helmotz-prob}
\end{equation}
where the forcing term is given to have %
%
the analytical solution 
$u(x,y) = \sin(\pi x) \sin(\pi y)$. 
For each training epoch, we randomly sampled 4,000 collocation points in the interior and 
100 boundary points on each edge of the domain. 
The MLP baseline has 5 hidden layers with 128 nodes each, whereas 
all the KAN variants have 2 hidden layers of 5 nodes per layer and 10 grids 
for each activation function. 
Training is performed using the Adam optimizer with a learning rate of $10^{-3}$.

As shown in \cref{fig:helmholtz}, both the MLP and Free-RBF-KAN architectures successfully 
approximate the PDE solution. 
Although minor inaccuracies remain near the boundaries, 
these errors could be further reduced by explicitly enforcing boundary conditions within the network architecture \citep{wang2023expert}. 
In contrast, the standard RBF-KAN produces inferior approximations, 
highlighting the performance gains achieved through adaptive centroids and kernel shapes in the Free-RBF
approach. 
Notably, the original KAN fails to capture the solution over the entire domain, 
a limitation that is likely due to the high smoothness demands of the target function. 
While increasing the B-spline order could potentially improve KAN’s expressive capacity, doing so would incur substantial computational overhead.

Furthermore, the timing results show that KAN is significantly slower than the other approaches, indicating that the B-spline formulation is computationally expensive 
when combined with AD, as shown by the timings in \cref{tbl:helmholtz}. 
In contrast, RBF-KAN achieves a substantial speed-up. 

\begin{figure*}[htp]
    \centering
    \begin{subfigure}[t]{0.309\linewidth}
        \caption{Exact}
        \vspace{-0.23em}
        \includegraphics[width=\linewidth]{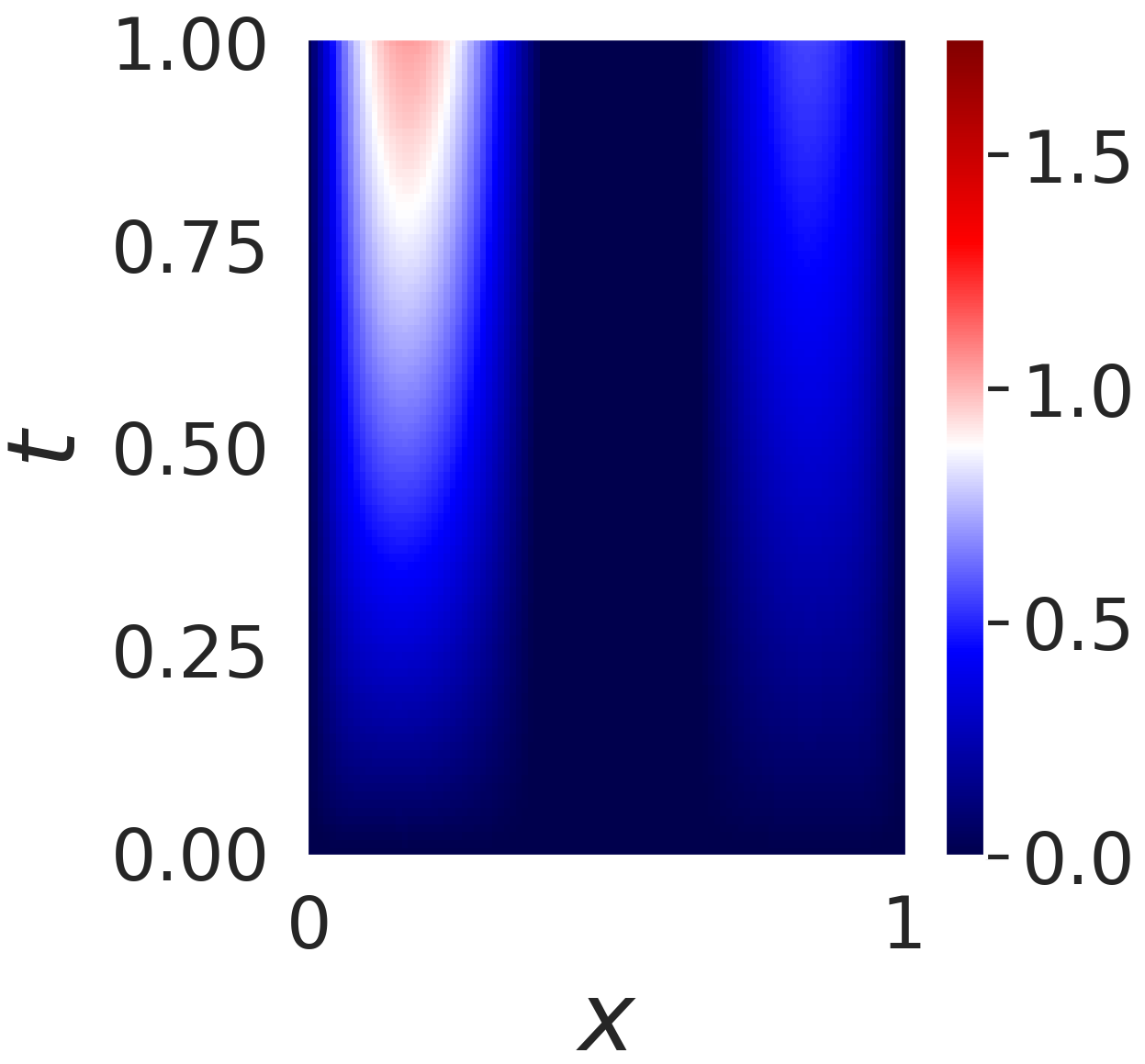}
    \end{subfigure}%
    \begin{subfigure}[t]{0.17\linewidth}
        \caption{KAN}
        \includegraphics[width=\linewidth]{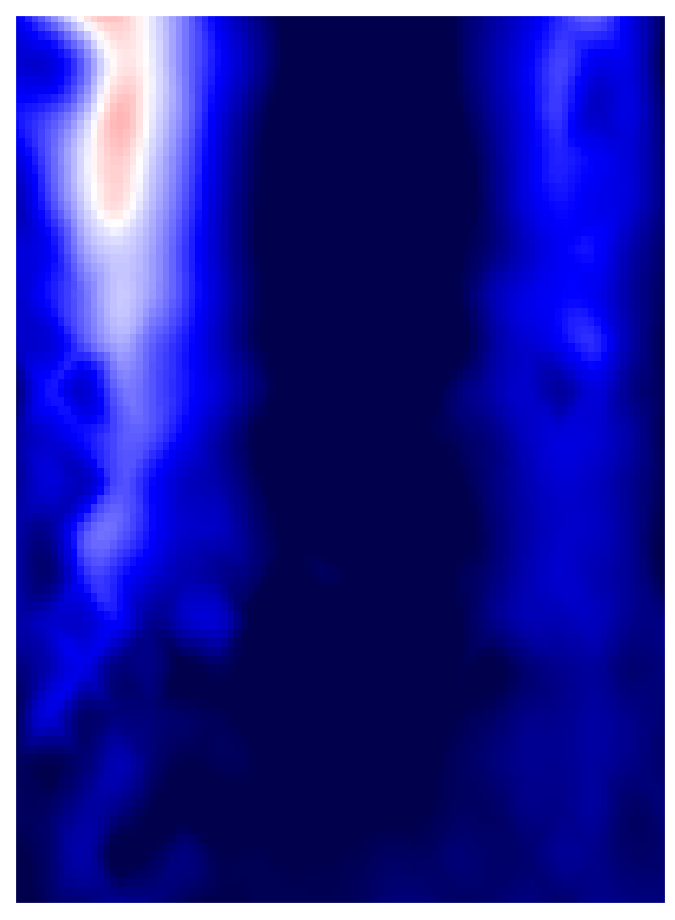}
    \end{subfigure}%
    \begin{subfigure}[t]{0.17\linewidth}
        \caption{MLP}
        \includegraphics[width=\linewidth]{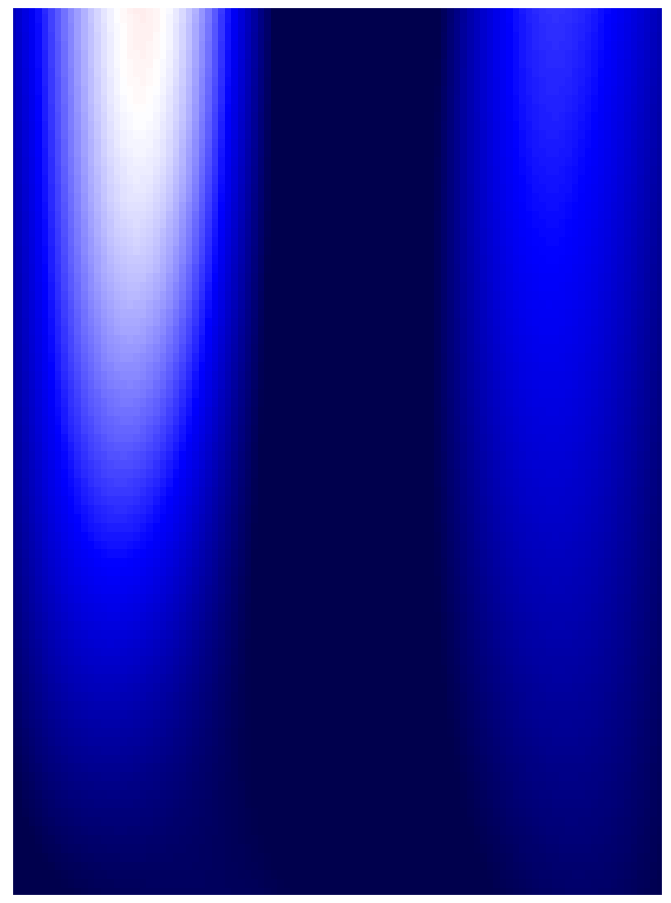}
    \end{subfigure}%
    \begin{subfigure}[t]{0.169\linewidth}
        \caption{RBF-KAN}
        \includegraphics[width=\linewidth]{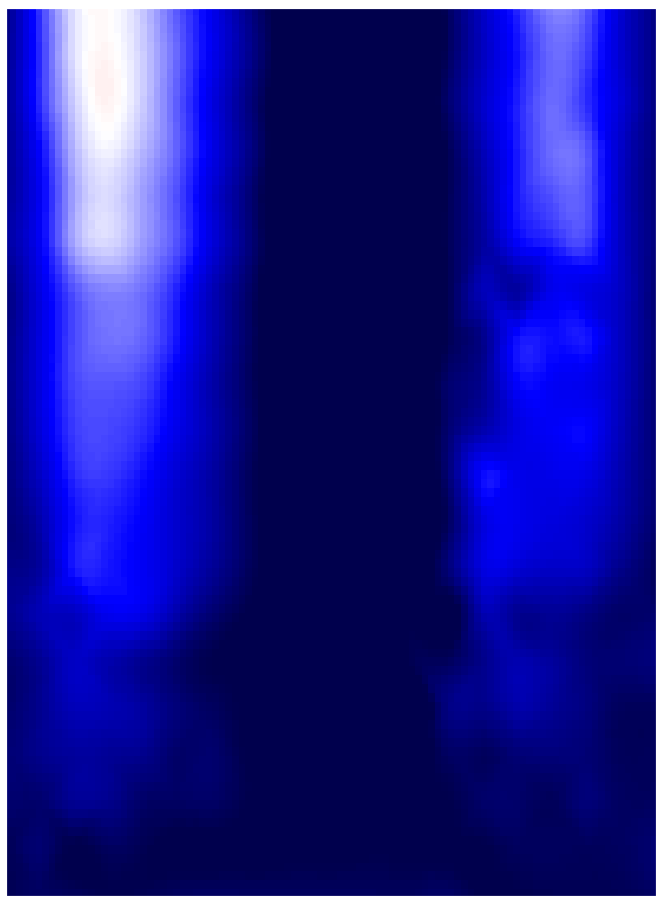}
    \end{subfigure}%
    \begin{subfigure}[t]{0.169\linewidth}
        \caption{Free-RBF-KAN}
        \includegraphics[width=\linewidth]{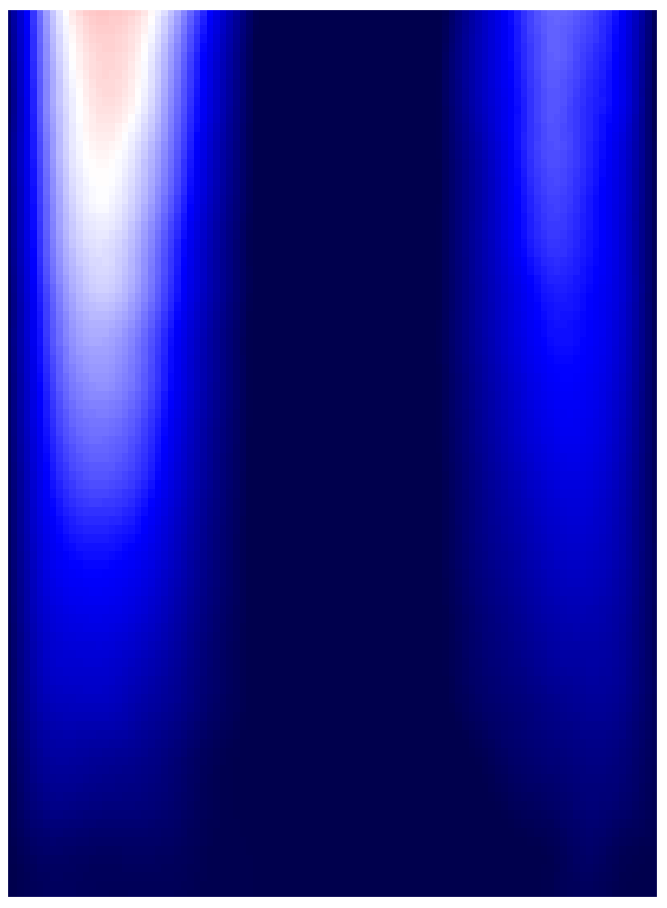}
    \end{subfigure}%
    \caption{The prediction of DeepONet variations on reaction-diffusion PDE. All models has same branch structure with MLP, but with different trunk architecture. A sampled force function ($f$) is used to generate the solution ($u(x,t)$). \label{fig:donet}}
\end{figure*}

\subsection{DeepONet with KAN Variants}

Our final set of experiments investigates the performance of different architectures within the DeepONet framework. DeepONet \citep{lu2019deeponet} consists of a branch network and a trunk network; in our study, we evaluate how various choices for the trunk network, 
including MLP, KAN, RBF-KAN, and Free-RBF-KAN, affect operator-learning accuracy, while the branch network is kept fixed as an MLP. The goal is to learn the
solution operator $\mathcal{G}: f(x) \mapsto u(x, t)$ for the 1D nonlinear reaction diffusion equation:
\begin{equation}
    u_t = \frac{1}{100} u_{xx} + \frac{1}{100}u^2 + f,\quad (x,t) \in (0,1) \times (0,1),
    \label{eq:rd}
\end{equation}
with zero boundary and initial condition. All DeepONet variations share the same branch network structure with 100 inputs, 100 outputs, 4 hidden layers of width 40. 
The trunk net is replaced by 
MLP, KAN, RBF-KAN, or Free-RBF-KAN, following the configurations summarized
in \cref{tb:deeponet}.

All models were trained with the Adam optimizer using 
a learning rate of $10^{-3}$, and an exponential learning rate scheduler 
($\gamma=0.95$) for 10,000 steps. 
Training data consist of 50 forcing functions sampled from a Gaussian Random Field (GRF) 
with the length scale $0.2$, where
in each sample 10 locations of sensor observation $u(x, t)$ are randomly selected  for training.
Performance was evaluated by the relative mean squared error 
(RMSE) on the full $100 \times 100$ spatiotemporal grid, 
and averaged over 30 random test seeds. 

\cref{fig:donet} visualizes the model predictions for a representative test function. 
Among the evaluated architectures, Free-RBF-KAN provides the most accurate approximation, 
slightly outperforming the MLP in capturing fine-scale features. 
In contrast, both the standard RBF-KAN and the original KAN 
exhibit inferior performance. 
These results underscore the importance of adaptive RBFs and 
suggest that the fixed B-spline bases used in KAN may 
lack sufficient smoothness to effectively represent the target solution. 

The MLP trunk employs fully connected layers of width 40, yielding the largest model size. In contrast, the KAN-based trunks use a much smaller hidden dimension of 4 and 20 grid points for their basis functions. KAN uses cubic B-splines, whereas the RBF-KAN variants employ Gaussian kernels. The results 
presented in \cref{tb:deeponet}
show that Free-RBF-KAN achieves the highest accuracy among all tested architectures while also requiring fewer parameters than the standard KAN.

\section{Conclusion}

In this paper, we introduced {Free-RBF-KAN}, an efficient evolution of Kolmogorov–Arnold networks that replaces fixed B-spline bases with adaptive Gaussian kernels. By jointly optimizing grid centroids and shape parameters, our method circumvents the recursive computational overhead of B-splines, achieving a significant reduction in training time on regression and PDE benchmarks  while simultaneously improving approximation accuracy. Theoretically, we extended the universal approximation theorem to the RBF-KAN family; empirically, we verified its lack of spectral bias and robust generalization in physics-informed regimes. Although dense MLPs remain the standard for unstructured high-dimensional data, our findings position {Free-RBF-KAN} as a compelling primitive for scientific computing where interpretability, sparsity, and derivative smoothness are paramount. Future directions include exploring dualities with Gaussian-activated MLPs to leverage mature optimization kernels for further scalability.

\section*{Impact Statement}

This research contributes to the fundamental advancement of Machine Learning. While such work inherently possesses broad societal implications, we do not identify any specific consequences that require distinct emphasis here.


\bibliography{ref}

\begin{thebibliography}{40}
\providecommand{\natexlab}[1]{#1}
\providecommand{\url}[1]{\texttt{#1}}
\expandafter\ifx\csname urlstyle\endcsname\relax
  \providecommand{\doi}[1]{doi: #1}\else
  \providecommand{\doi}{doi: \begingroup \urlstyle{rm}\Url}\fi

\bibitem[Abueidda et~al.(2025)Abueidda, Pantidis, and
  Mobasher]{abueidda2025deepokan}
Diab~W Abueidda, Panos Pantidis, and Mostafa~E Mobasher.
\newblock {DeepOKAN}: Deep operator network based on {Kolmogorov} {Arnold}
  networks for mechanics problems.
\newblock \emph{Computer Methods in Applied Mechanics and Engineering},
  436:\penalty0 117699, 2025.

\bibitem[Actor et~al.(2025)Actor, Harper, Southworth, and
  Cyr]{actor2025leveraging}
Jonas~A. Actor, Graham Harper, Ben Southworth, and Eric~C. Cyr.
\newblock Leveraging {KANs} for expedient training of multichannel {MLPs} via
  preconditioning and geometric refinement.
\newblock \emph{arXiv preprint arXiv:2505.18131}, 2025.
\newblock URL \url{https://arxiv.org/abs/2505.18131}.

\bibitem[Arnol'd(1957)]{arnol1957functions}
Vladimir~Igorevich Arnol'd.
\newblock On functions of three variables.
\newblock In \emph{Doklady Akademii Nauk}, volume 114, pages 679--681. Russian
  Academy of Sciences, 1957.

\bibitem[Bai et~al.(2023)Bai, Liu, Gupta, Alzubaidi, Feng, and
  Gu]{bai2023physics}
Jinshuai Bai, Gui-Rong Liu, Ashish Gupta, Laith Alzubaidi, Xi-Qiao Feng, and
  YuanTong Gu.
\newblock Physics-informed radial basis network ({PIRBN}): A local
  approximating neural network for solving nonlinear partial differential
  equations.
\newblock \emph{Computer Methods in Applied Mechanics and Engineering},
  415:\penalty0 116290, 2023.

\bibitem[Braun and Griebel(2009)]{braun2009constructive}
J{\"u}rgen Braun and Michael Griebel.
\newblock On a constructive proof of {Kolmogorov's} superposition theorem.
\newblock \emph{Constructive approximation}, 30\penalty0 (3):\penalty0
  653--675, 2009.

\bibitem[Chen(2024)]{chen2024gaussian}
Andrew~Siyuan Chen.
\newblock Gaussian process {Kolmogorov-Arnold} networks.
\newblock \emph{arXiv preprint arXiv:2407.18397}, 2024.

\bibitem[Delis(2024)]{Athanasios2024}
Athanasios Delis.
\newblock Fasterkan.
\newblock \url{https://github.com/AthanasiosDelis/faster-kan/}, 2024.

\bibitem[Fridman(1967)]{fridman1967improvement}
Buma~L Fridman.
\newblock An improvement in the smoothness of the functions in {AN
  Kolmogorov}'s theorem on superpositions.
\newblock In \emph{Doklady Akademii Nauk}, volume 177, pages 1019--1022.
  Russian Academy of Sciences, 1967.

\bibitem[Girosi and Poggio(1989)]{girosi1989representation}
Federico Girosi and Tomaso Poggio.
\newblock Representation properties of networks: {Kolmogorov}'s theorem is
  irrelevant.
\newblock \emph{Neural Computation}, 1\penalty0 (4):\penalty0 465--469, 1989.

\bibitem[Girosi and Poggio(1990)]{girosi1990networks}
Federico Girosi and Tomaso Poggio.
\newblock Networks and the best approximation property.
\newblock \emph{Biological cybernetics}, 63\penalty0 (3):\penalty0 169--176,
  1990.

\bibitem[Ismayilova and Ismayilov(2024)]{ismayilova2024universal}
Aysu Ismayilova and Muhammad Ismayilov.
\newblock On the universal approximation property of radial basis function
  neural networks.
\newblock \emph{Annals of Mathematics and Artificial Intelligence}, 92\penalty0
  (3):\penalty0 691--701, 2024.

\bibitem[Kolmogorov(1956)]{kolmogorov2009representation}
A.N. Kolmogorov.
\newblock On the representation of continuous functions of several variables as
  superpositions of continuous functions of a smaller number of variables.
\newblock \emph{Dokl. Akad. Nauk}, 108\penalty0 (2), 1956.

\bibitem[Krisnawan et~al.(2025)Krisnawan, Mukti, and Purnomo]{krisnawan2025rbf}
Aditya~Bagus Krisnawan, Prasetiyono~Hari Mukti, and Mauridhi~Hery Purnomo.
\newblock {RBF-KAN}: Integrated approach for accurate indoor localization in
  dense grid {RSSI} fingerprint.
\newblock In \emph{2025 17th International Conference on Knowledge and Smart
  Technology (KST)}, pages 46--51. IEEE, 2025.

\bibitem[Kurkova(1991)]{kuurkova1991kolmogorov}
Vera Kurkova.
\newblock Kolmogorov's theorem is relevant.
\newblock \emph{Neural computation}, 3\penalty0 (4):\penalty0 617--622, 1991.

\bibitem[Leshno et~al.(1993)Leshno, Lin, Pinkus, and
  Schocken]{leshno1993multilayer}
Moshe Leshno, Vladimir~Ya Lin, Allan Pinkus, and Shimon Schocken.
\newblock Multilayer feedforward networks with a nonpolynomial activation
  function can approximate any function.
\newblock \emph{Neural networks}, 6\penalty0 (6):\penalty0 861--867, 1993.

\bibitem[Li(2024)]{li2024fastkan}
Ziyao Li.
\newblock {Kolmogorov-Arnold} networks are radial basis function networks.
\newblock \emph{arXiv preprint arXiv:2405.06721}, 2024.
\newblock URL \url{https://arxiv.org/abs/2405.06721}.

\bibitem[Liu et~al.(2024)Liu, Wang, Vaidya, Ruehle, Halverson,
  Solja{\v{c}}i{\'c}, Hou, and Tegmark]{liu2024kan}
Ziming Liu, Yixuan Wang, Sachin Vaidya, Fabian Ruehle, James Halverson, Marin
  Solja{\v{c}}i{\'c}, Thomas~Y Hou, and Max Tegmark.
\newblock {KAN}: {Kolmogorov-Arnold} networks.
\newblock \emph{arXiv preprint arXiv:2404.19756}, 2024.

\bibitem[Lu et~al.(2019)Lu, Jin, and Karniadakis]{lu2019deeponet}
Lu~Lu, Pengzhan Jin, and George~Em Karniadakis.
\newblock {DeepONet}: Learning nonlinear operators for identifying differential
  equations based on the universal approximation theorem of operators.
\newblock \emph{arXiv preprint arXiv:1910.03193}, 2019.

\bibitem[Orr et~al.(1996)]{orr1996introduction}
Mark~JL Orr et~al.
\newblock Introduction to radial basis function networks, 1996.

\bibitem[Park and Sandberg(1991)]{park1991universal}
Jooyoung Park and Irwin~W Sandberg.
\newblock Universal approximation using radial-basis-function networks.
\newblock \emph{Neural computation}, 3\penalty0 (2):\penalty0 246--257, 1991.

\bibitem[Park and Sandberg(1993)]{park1993approximation}
Jooyoung Park and Irwin~W Sandberg.
\newblock Approximation and radial-basis-function networks.
\newblock \emph{Neural computation}, 5\penalty0 (2):\penalty0 305--316, 1993.

\bibitem[Pinkus(1999)]{Pinkus_1999}
Allan Pinkus.
\newblock Approximation theory of the {MLP} model in neural networks.
\newblock \emph{Acta Numerica}, 8:\penalty0 143–195, 1999.
\newblock \doi{10.1017/S0962492900002919}.

\bibitem[Que and Belkin(2016)]{que2016back}
Qichao Que and Mikhail Belkin.
\newblock Back to the future: Radial basis function networks revisited.
\newblock In \emph{Artificial intelligence and statistics}, pages 1375--1383.
  PMLR, 2016.

\bibitem[Raissi et~al.(2019)Raissi, Perdikaris, and Karniadakis]{raissi2019a}
M.~Raissi, P.~Perdikaris, and G.~E. Karniadakis.
\newblock Physics-informed neural networks: {{A}} deep learning framework for
  solving forward and inverse problems involving nonlinear partial differential
  equations.
\newblock \emph{Journal of Computational Physics}, 378:\penalty0 686--707,
  2019.
\newblock ISSN 0021-9991.
\newblock \doi{10.1016/j.jcp.2018.10.045}.
\newblock URL
  \url{https://www.sciencedirect.com/science/article/pii/S0021999118307125}.

\bibitem[Shukla et~al.(2024)Shukla, Toscano, Wang, Zou, and
  Karniadakis]{shukla2024comprehensive}
Khemraj Shukla, Juan~Diego Toscano, Zhicheng Wang, Zongren Zou, and George~Em
  Karniadakis.
\newblock A comprehensive and fair comparison between {MLP} and {KAN}
  representations for differential equations and operator networks.
\newblock \emph{Computer Methods in Applied Mechanics and Engineering},
  431:\penalty0 117290, 2024.

\bibitem[Somvanshi et~al.(2024)Somvanshi, Javed, Islam, Pandit, and
  Das]{somvanshi2024survey}
Shriyank Somvanshi, Syed~Aaqib Javed, Md~Monzurul Islam, Diwas Pandit, and
  Subasish Das.
\newblock A survey on {Kolmogorov-Arnold} network.
\newblock \emph{arXiv preprint arXiv:2411.06078}, 2024.
\newblock URL \url{https://arxiv.org/abs/2411.06078}.

\bibitem[SS et~al.(2024)SS, AR, R, and KP]{ss2024chebyshev}
Sidharth SS, Keerthana AR, Gokul R, and Anas KP.
\newblock Chebyshev polynomial-based {Kolmogorov-Arnold} networks: An efficient
  architecture for nonlinear function approximation.
\newblock \emph{arXiv preprint arXiv:2405.07200}, 2024.

\bibitem[Ta(2024)]{ta2024bsrbfkan}
Hoang-Thang Ta.
\newblock {BSRBF-KAN}: A combination of {B}-splines and radial basis functions
  in {Kolmogorov-Arnold} networks.
\newblock \emph{arXiv preprint arXiv:2406.11173}, 2024.
\newblock URL \url{https://arxiv.org/abs/2406.11173}.

\bibitem[Toscano et~al.(2025)Toscano, Wang, and Karniadakis]{toscano2025kkans}
Juan~Diego Toscano, Li-Lian Wang, and George~Em Karniadakis.
\newblock {KKANs}: {Kurkova-Kolmogorov-Arnold} networks and their learning
  dynamics.
\newblock \emph{Neural Networks}, page 107831, 2025.

\bibitem[Wang et~al.(2020)Wang, Teng, and Perdikaris]{wang2020understanding}
Sifan Wang, Yujun Teng, and Paris Perdikaris.
\newblock Understanding and mitigating gradient pathologies in physics-informed
  neural networks. arxiv e-prints.
\newblock \emph{arXiv preprint arXiv:2001.04536}, 2020.

\bibitem[Wang et~al.(2021)Wang, Wang, and Perdikaris]{wang2021eigenvector}
Sifan Wang, Hanwen Wang, and Paris Perdikaris.
\newblock On the eigenvector bias of fourier feature networks: From regression
  to solving multi-scale {PDEs} with physics-informed neural networks.
\newblock \emph{Computer Methods in Applied Mechanics and Engineering},
  384:\penalty0 113938, 2021.

\bibitem[Wang et~al.(2023{\natexlab{a}})Wang, Sankaran, Wang, and
  Perdikaris]{wang2023expert}
Sifan Wang, Shyam Sankaran, Hanwen Wang, and Paris Perdikaris.
\newblock An expert's guide to training physics-informed neural networks.
\newblock \emph{arXiv preprint arXiv:2308.08468}, 2023{\natexlab{a}}.

\bibitem[Wang et~al.(2024{\natexlab{a}})Wang, Siegel, Liu, and
  Hou]{wang2024expressiveness}
Yixuan Wang, Jonathan~W Siegel, Ziming Liu, and Thomas~Y Hou.
\newblock On the expressiveness and spectral bias of {KANs}.
\newblock \emph{arXiv preprint arXiv:2410.01803}, 2024{\natexlab{a}}.

\bibitem[Wang et~al.(2024{\natexlab{b}})Wang, Sun, Bai, Anitescu, Eshaghi,
  Zhuang, Rabczuk, and Liu]{wang2024kolmogorov}
Yizheng Wang, Jia Sun, Jinshuai Bai, Cosmin Anitescu, Mohammad~Sadegh Eshaghi,
  Xiaoying Zhuang, Timon Rabczuk, and Yinghua Liu.
\newblock Kolmogorov {Arnold} informed neural network: A physics-informed deep
  learning framework for solving forward and inverse problems based on
  {Kolmogorov Arnold} networks.
\newblock \emph{arXiv preprint arXiv:2406.11045}, 2024{\natexlab{b}}.

\bibitem[Wang et~al.(2023{\natexlab{b}})Wang, Chen, and Chen]{wang2023solving}
Zhiwen Wang, Minxin Chen, and Jingrun Chen.
\newblock Solving multiscale elliptic problems by sparse radial basis function
  neural networks.
\newblock \emph{Journal of Computational Physics}, 492:\penalty0 112452,
  2023{\natexlab{b}}.

\bibitem[Wettschereck and Dietterich(1991)]{wettschereck1991improving}
Dietrich Wettschereck and Thomas Dietterich.
\newblock Improving the performance of radial basis function networks by
  learning center locations.
\newblock \emph{Advances in neural information processing systems}, 4, 1991.

\bibitem[Xu et~al.(1994)Xu, Krzy{\.z}ak, and Yuille]{xu1994radial}
Lei Xu, Adam Krzy{\.z}ak, and Alan Yuille.
\newblock On radial basis function nets and kernel regression: Statistical
  consistency, convergence rates, and receptive field size.
\newblock \emph{Neural Networks}, 7\penalty0 (4):\penalty0 609--628, 1994.

\bibitem[Yu et~al.(2024)Yu, Yu, and Wang]{yu2024kan}
Runpeng Yu, Weihao Yu, and Xinchao Wang.
\newblock {KAN} or {MLP}: A fairer comparison.
\newblock \emph{arXiv preprint arXiv:2407.16674}, 2024.

\bibitem[Zeng et~al.(2024)Zeng, Burghardt, and Gambaruto]{zeng2024rbf}
Chengxi Zeng, Tilo Burghardt, and Alberto~M Gambaruto.
\newblock {RBF-PINN}: Non-{Fourier} positional embedding in physics-informed
  neural networks.
\newblock \emph{arXiv preprint arXiv:2402.08367}, 2024.

\bibitem[Zheng et~al.(2025)Zheng, Zhang, Yue, Xu, Maennel, and
  Chen]{zheng2025free}
Liangwewi~Nathan Zheng, Wei~Emma Zhang, Lin Yue, Miao Xu, Olaf Maennel, and
  Weitong Chen.
\newblock Free-knots {Kolmogorov-Arnold} network: On the analysis of spline
  knots and advancing stability.
\newblock \emph{arXiv preprint arXiv:2501.09283}, 2025.

\end{thebibliography}

\pagebreak
\section*{Appendix}

\setcounter{section}{0}

\section{Full proof for \cref{thm:np-kan}}\label{sec:proof}

\begin{proof}
Let $f \in C([0,1]^d)$ and $\varepsilon > 0$ be given.
By Lemma \ref{lemma:KART}, $f$ 
admits the representation~\eqref{eq:KART} with 
continuous univariate functions $\{\Phi^{(q)}\}_{q=1}^{2d+1}$ and 
$\{\phi^{(pq)}\}_{1 \leq p \leq d, 1 \leq q \leq 2d+1}$.

For each $1 \leq q \leq 2d+1$, define inner sum $S_q: [0,1]^d \to \mathbb{R}$ by 
\begin{equation}
S_q(x) = \sum_{p=1}^{d} \phi^{(pq)}(x_p)  \text{ for all } x = (x_1, \cdots, x_d) \in [0,1]^d.
\end{equation}
Since $S_q$
is a continuous  on  the compact set $[0,1]^d$, 
its image $I_q = S_q([0,1]^d)$ is a compact interval in $\mathbb{R}$. 
By the Heine--Cantor theorem, $\Phi^{(q)}$ is uniformly continuous on $I_q$. 
Thus, there exists a $\delta_q > 0$ such that for all $y, \widehat{y} \in I_q$:
\begin{equation}
|y - \widehat{y}| < \delta_q \implies \left |\Phi^{(q)}(y) - \Phi^{(q)}(\widehat{y}) \right| < \frac{\varepsilon}{2(2d+1)}.
\label{eq:unif_cont}
\end{equation}
For each $1 \leq p \leq d$, by Lemma \ref{lemma:Leshno}, there exists $\widehat{\phi}^{(pq)} \in \mathcal{S}$ such that
\begin{equation}
\max_{x_p \in [0,1]} \left|\phi^{(pq)}(x_p) - \widehat{\phi}^{(pq)}(x_p)\right| < \frac{\delta_q}{d}.
\label{eq:inner_approx}
\end{equation}
Define $\widehat{S}_q: [0,1]^d \to \mathbb{R}$ by 
\begin{equation}
\widehat{S}_q(x_1,\ldots,x_d)
=
\sum_{p=1}^d \widehat{\phi}^{(pq)}(x_p) \text{ for all } x = (x_1, \cdots, x_d) \in [0,1]^d.
\end{equation}
Then
for all $x = (x_1, \cdots, x_d) \in [0,1]^d$,  
by the triangle inequality and \eqref{eq:inner_approx}, we have 
\begin{equation}
\left|S_q(x) - \widehat{S}_q(x)\right| \leq \sum_{p=1}^{d} \left| \phi^{(pq)}(x_p) - \widehat{\phi}^{(pq)}(x_p) \right| < \sum_{p=1}^d \frac{\delta_q}{d} = \delta_q.
\end{equation}
By \eqref{eq:unif_cont}, it follows that
\begin{equation}
\left| \Phi^{(q)}(S_q(x)) - \Phi^{(q)}(\widehat{S}_q(x))\right|  < \frac{\varepsilon}{2(2d+1)}.
\label{eq:inner_approx_val}
\end{equation}
Since $\widehat{S}_q$ is continuous on $[0,1]^d$,
$\widehat{I}_q=\widehat{S}_q([0,1]^d)$ is a compact interval.
Applying Lemma~\ref{lemma:Leshno} again, there exists $\widehat{\Phi}^{(q)} \in \mathcal{S}$ such that
\begin{equation}
\max_{y \in \widehat{I}_q} |\Phi^{(q)}(y) - \widehat{\Phi}^{(q)}(y)| < \frac{\varepsilon}{2(2d+1)} .
\label{eq:outer_approx_val}
\end{equation} 
Using triangle inequality and the estimates \eqref{eq:inner_approx_val} and 
\eqref{eq:outer_approx_val}, we arrive at 
\begin{equation}
\left| \Phi^{(q)}(S_q(x)) - \widehat{\Phi}^{(q)}(\widehat{S}_q(x)) \right| 
< \frac{\varepsilon}{2d+1}.
\end{equation}
Finally, summing over $q=1,\ldots,2d+1$ yields
\begin{equation}
\left| f(x) - g(x) \right| \leq \sum_{q=1}^{2d+1} \left| \Phi^{(q)}(S_q(x)) - \widehat{\Phi}^{(q)}(\widehat{S}_q(x)) \right| < \sum_{q=1}^{2d+1} \frac{\varepsilon}{2d+1} = \varepsilon,
\end{equation}
for all $x\in[0,1]^d$.
This concludes the proof.
\end{proof}

\pagebreak

\section{Solution visualization for experiments in Section~\ref{sec:piml}}

\begin{figure}[h!]
    \centering
    \begin{subfigure}[t]{0.32\textwidth}
        \caption{Exact}
        \includegraphics[width=\linewidth]{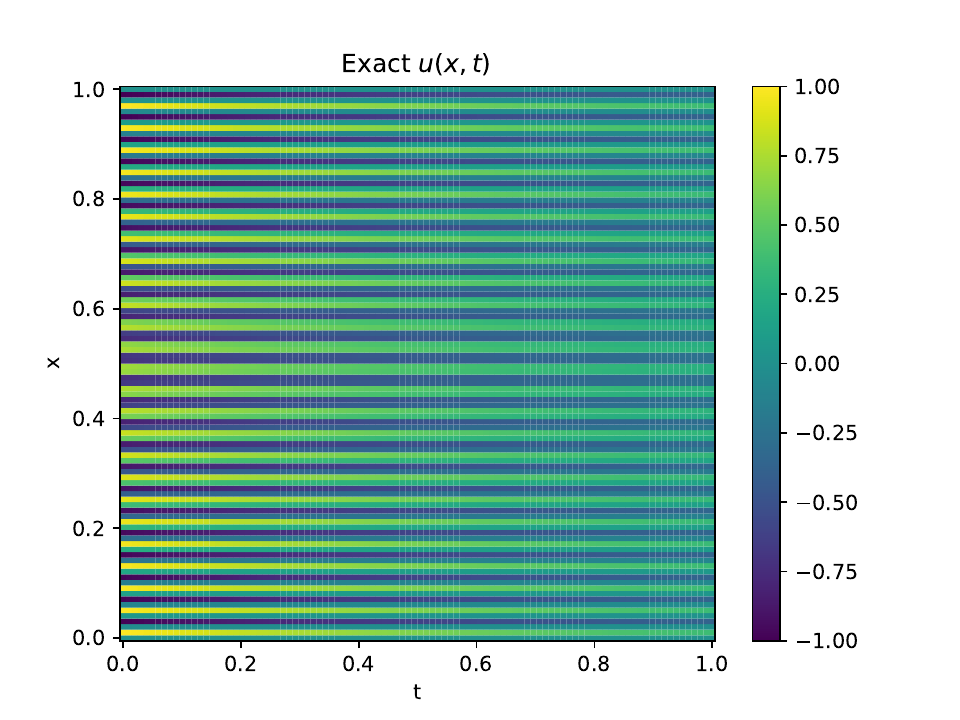}
        \label{fig:sub1}
    \end{subfigure}%
    \begin{subfigure}[t]{0.32\textwidth}
        \caption{MLP}
        \includegraphics[width=\linewidth]{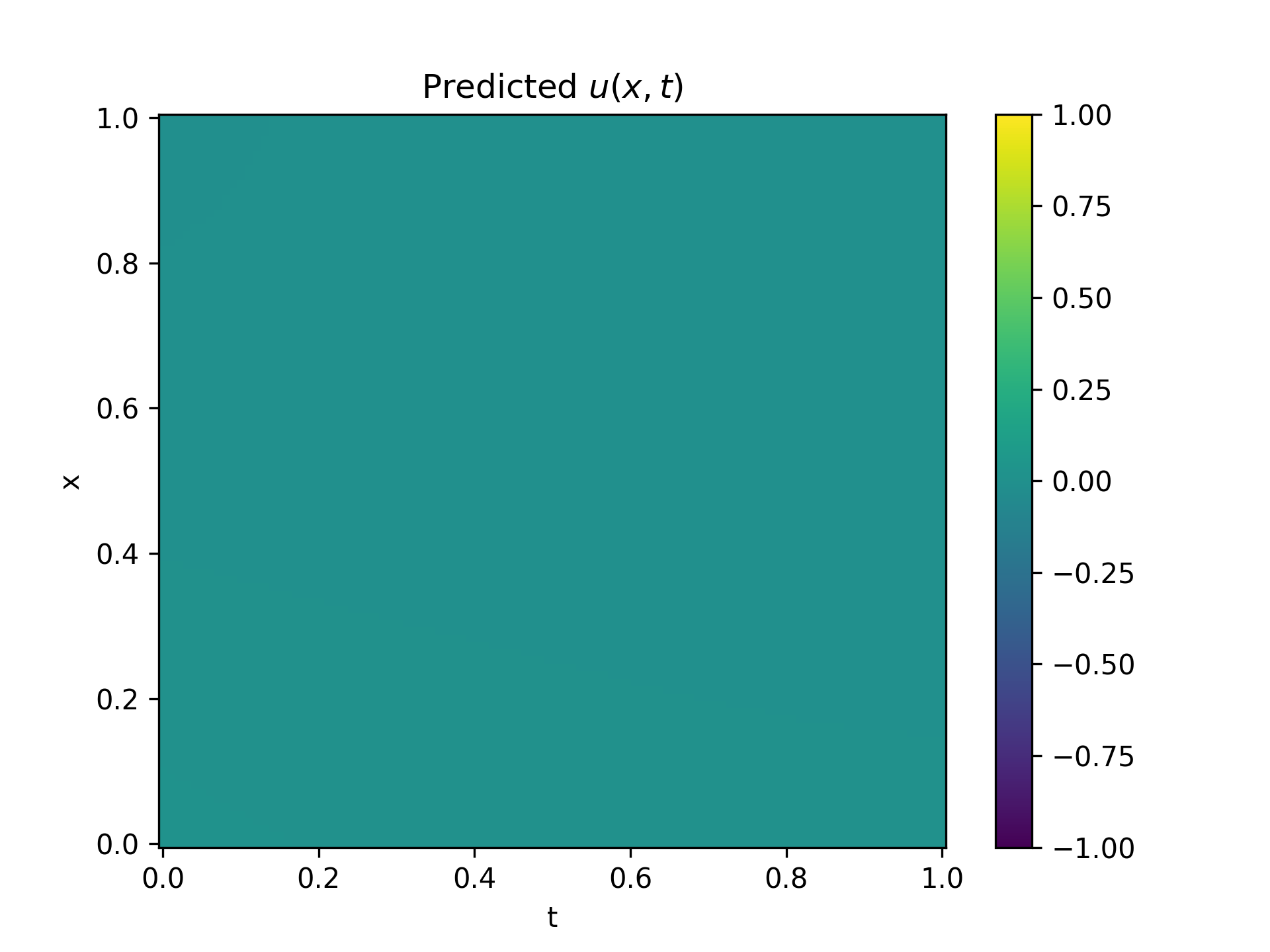}
        \label{fig:sub2}
    \end{subfigure}%
    \begin{subfigure}[t]{0.32\textwidth}
        \caption{Absolute Error}
        \includegraphics[width=\linewidth]{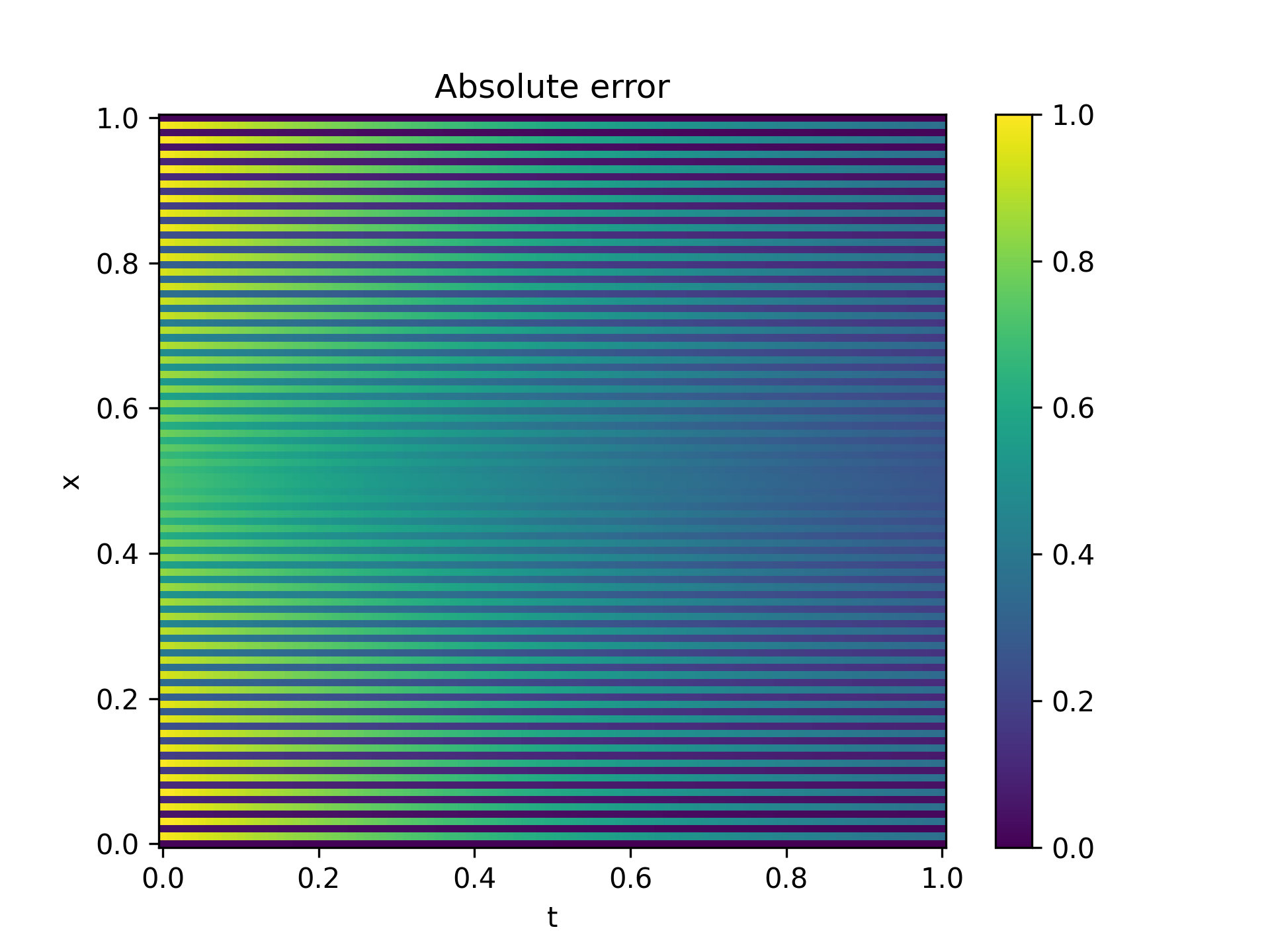}
        \label{fig:sub3}
    \end{subfigure}    
    \begin{subfigure}[t]{0.32\textwidth}
        \caption{Exact}
        \includegraphics[width=\linewidth]{heat_Exact_u.pdf}
        \label{fig:sub10}
    \end{subfigure}%
    \begin{subfigure}[t]{0.32\textwidth}
        \caption{KAN}
        \includegraphics[width=\linewidth]{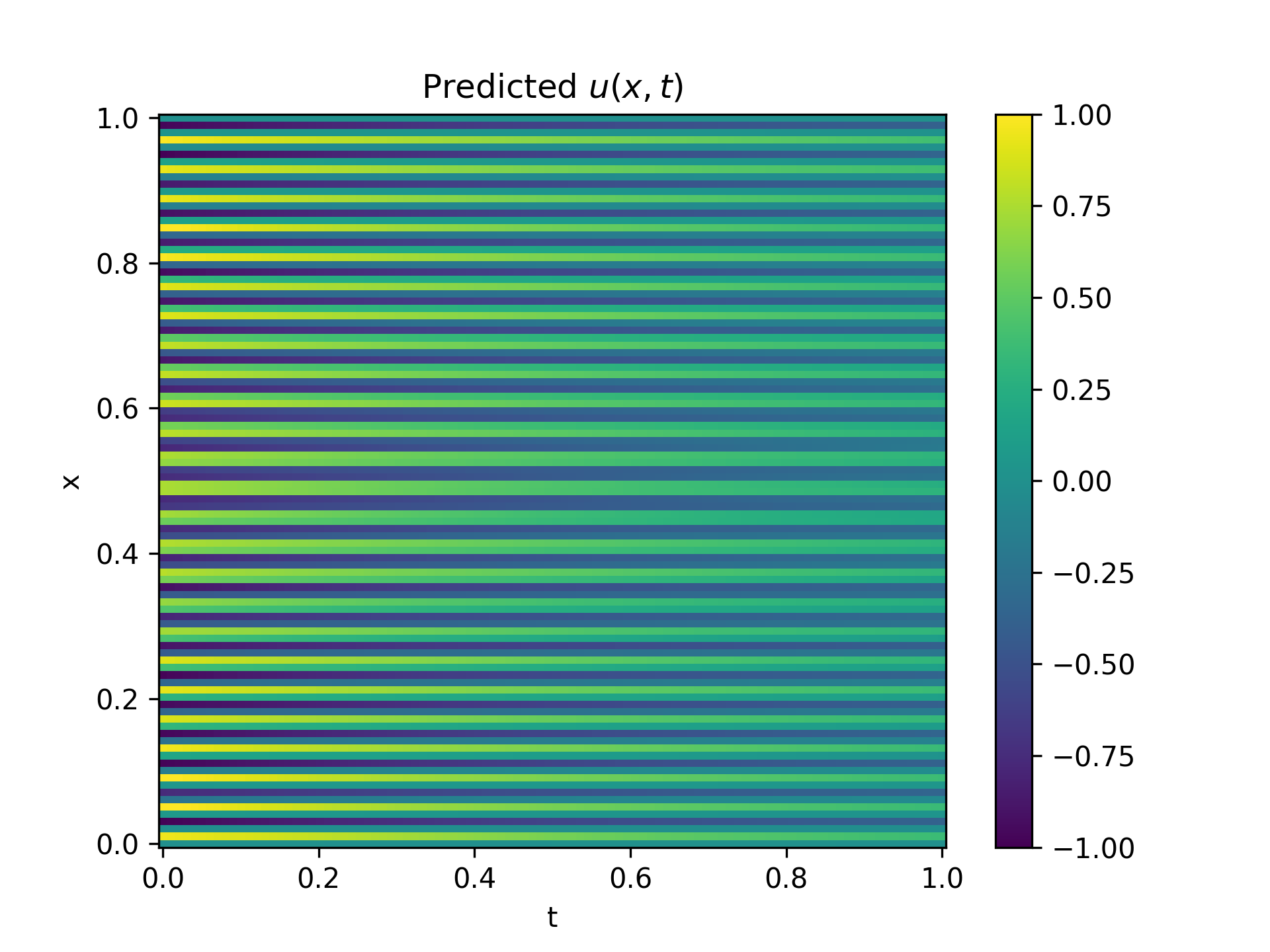}
        \label{fig:sub11}
    \end{subfigure}%
    \begin{subfigure}[t]{0.32\textwidth}
        \caption{Absolute Error}
        \includegraphics[width=\linewidth]{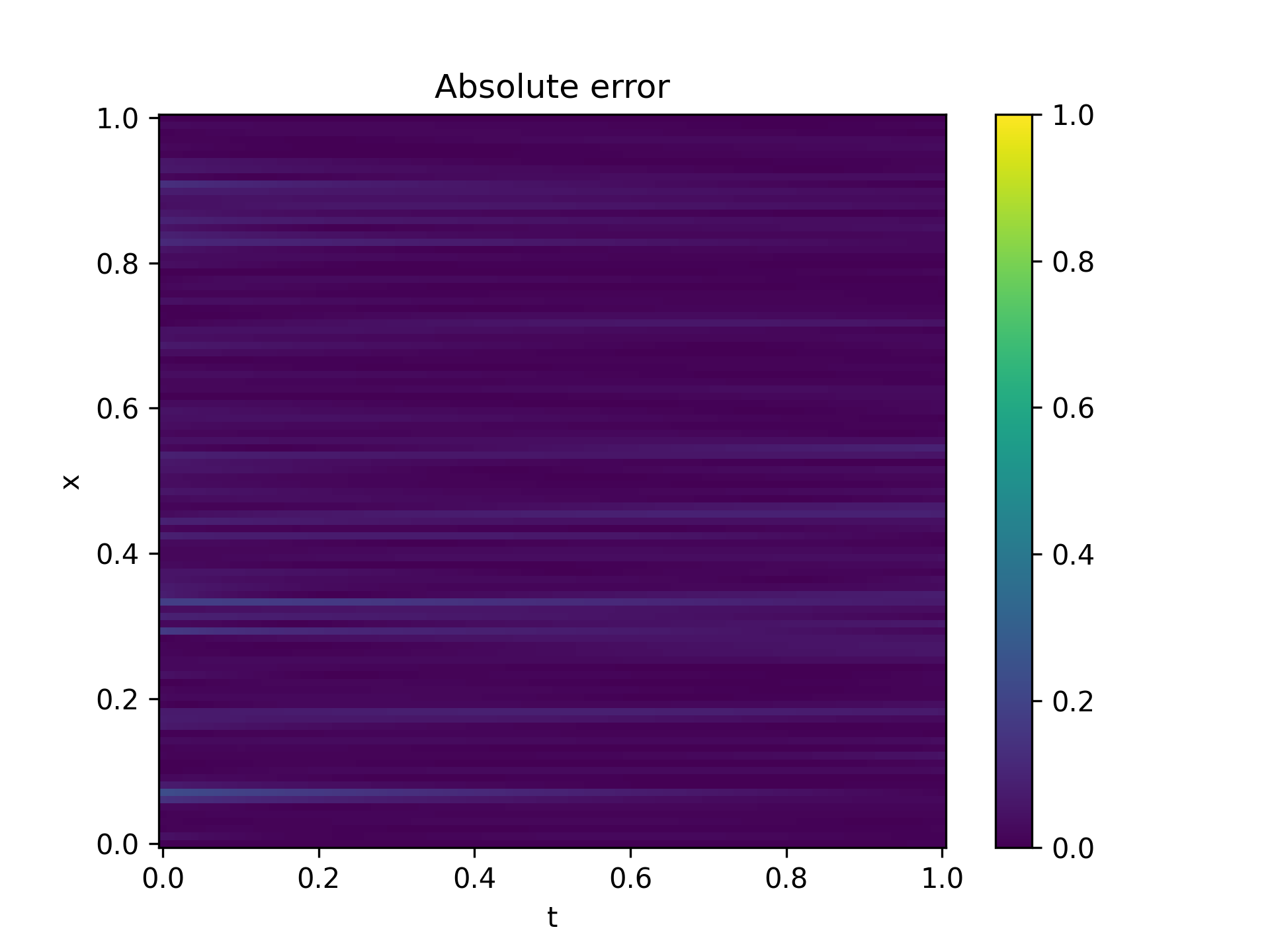}
        \label{fig:sub12}
    \end{subfigure} 
    \begin{subfigure}[t]{0.32\textwidth}
        \caption{Exact}
        \includegraphics[width=\linewidth]{heat_Exact_u.pdf}
        \label{fig:sub7}
    \end{subfigure}%
    \begin{subfigure}[t]{0.32\textwidth}
        \caption{RBF-KAN}
        \includegraphics[width=\linewidth]{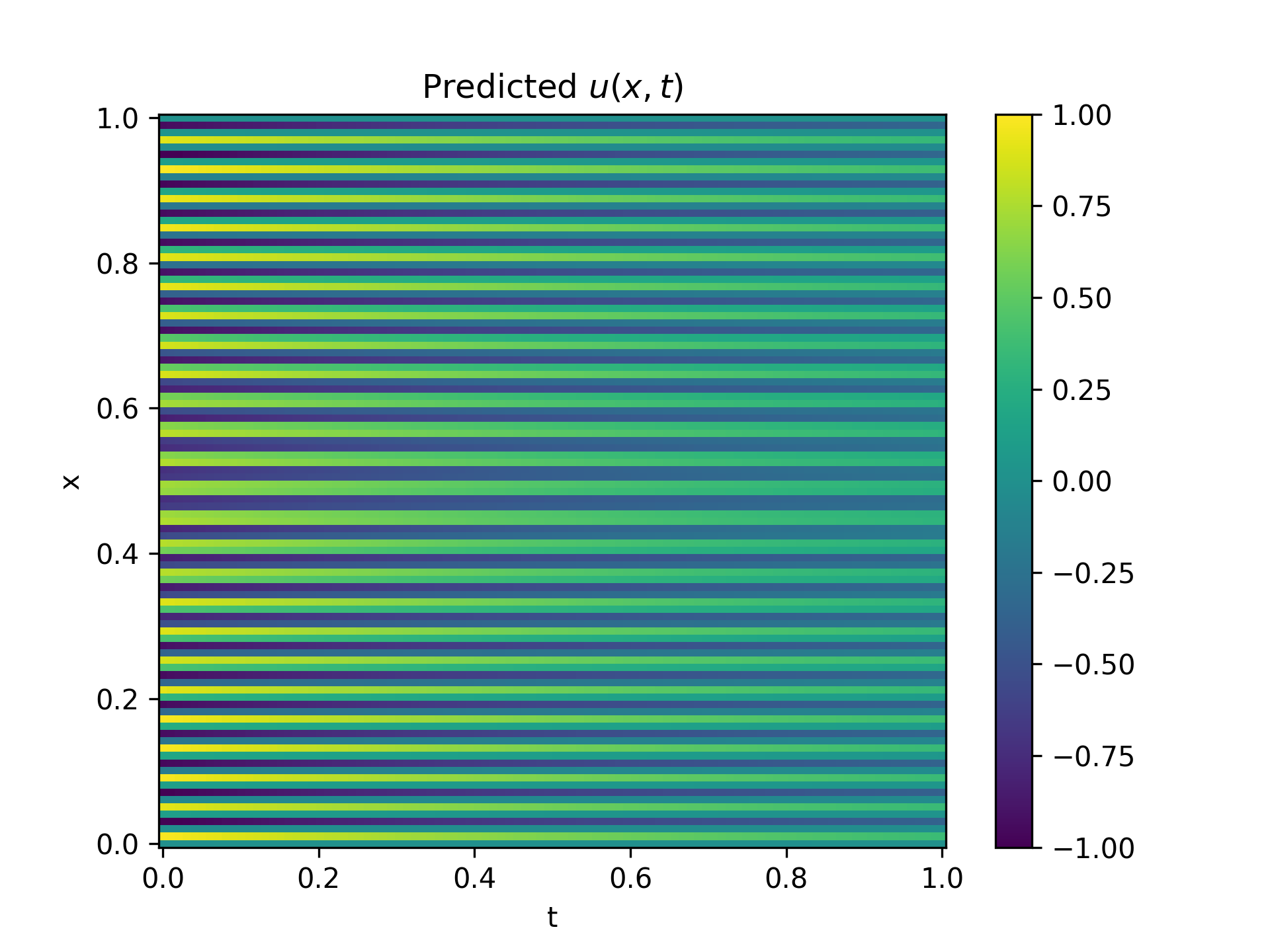}
        \label{fig:sub8}
    \end{subfigure}%
    \begin{subfigure}[t]{0.32\textwidth}
        \caption{Absolute Error}
        \includegraphics[width=\linewidth]{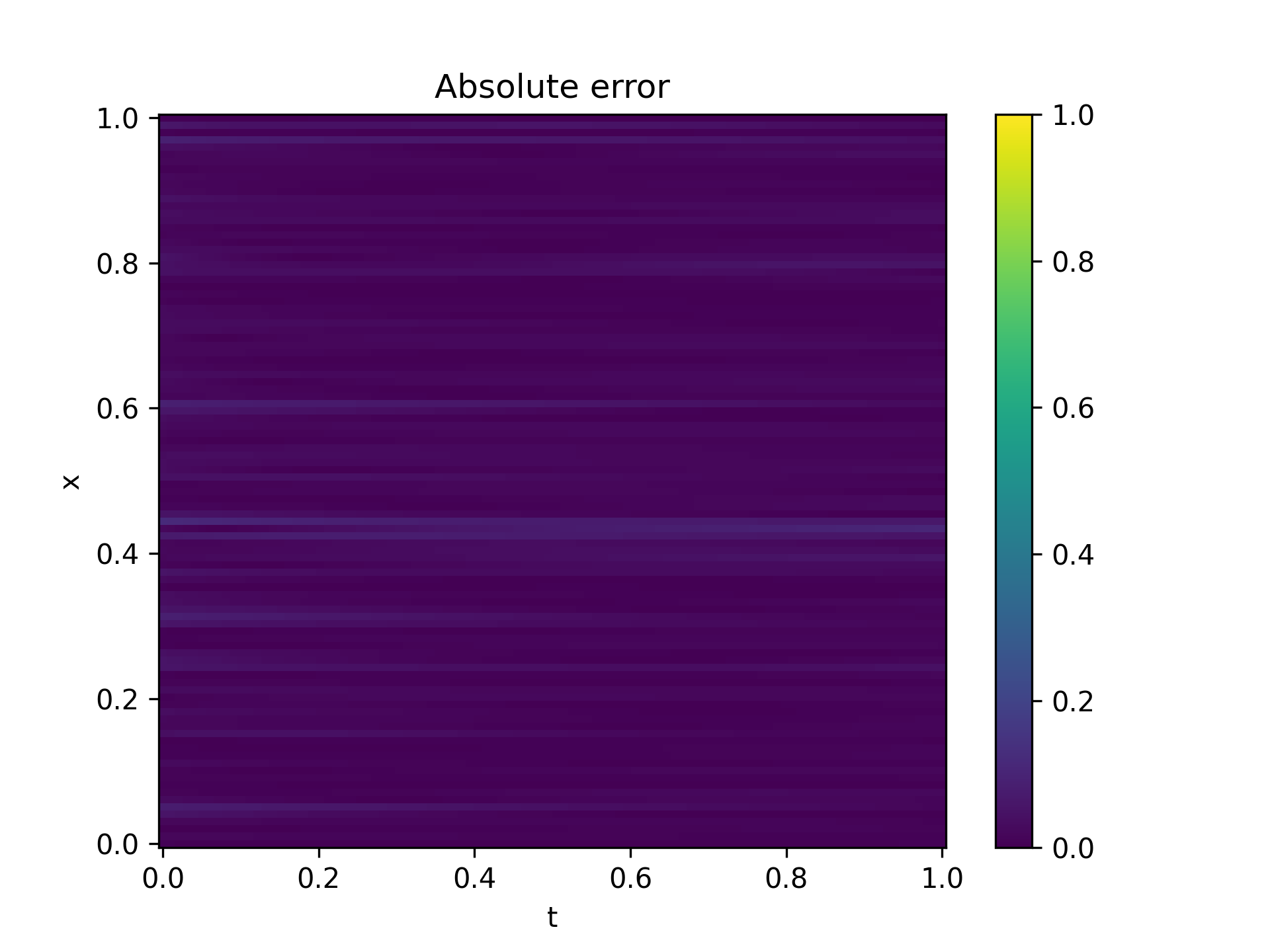}
        \label{fig:sub9}
    \end{subfigure} 
    \begin{subfigure}[t]{0.32\textwidth}
        \caption{Exact}
        \includegraphics[width=\linewidth]{heat_Exact_u.pdf}
        \label{fig:sub4}
    \end{subfigure}%
    \begin{subfigure}[t]{0.32\textwidth}
        \caption{Free-RBF-KAN}
        \includegraphics[width=\linewidth]{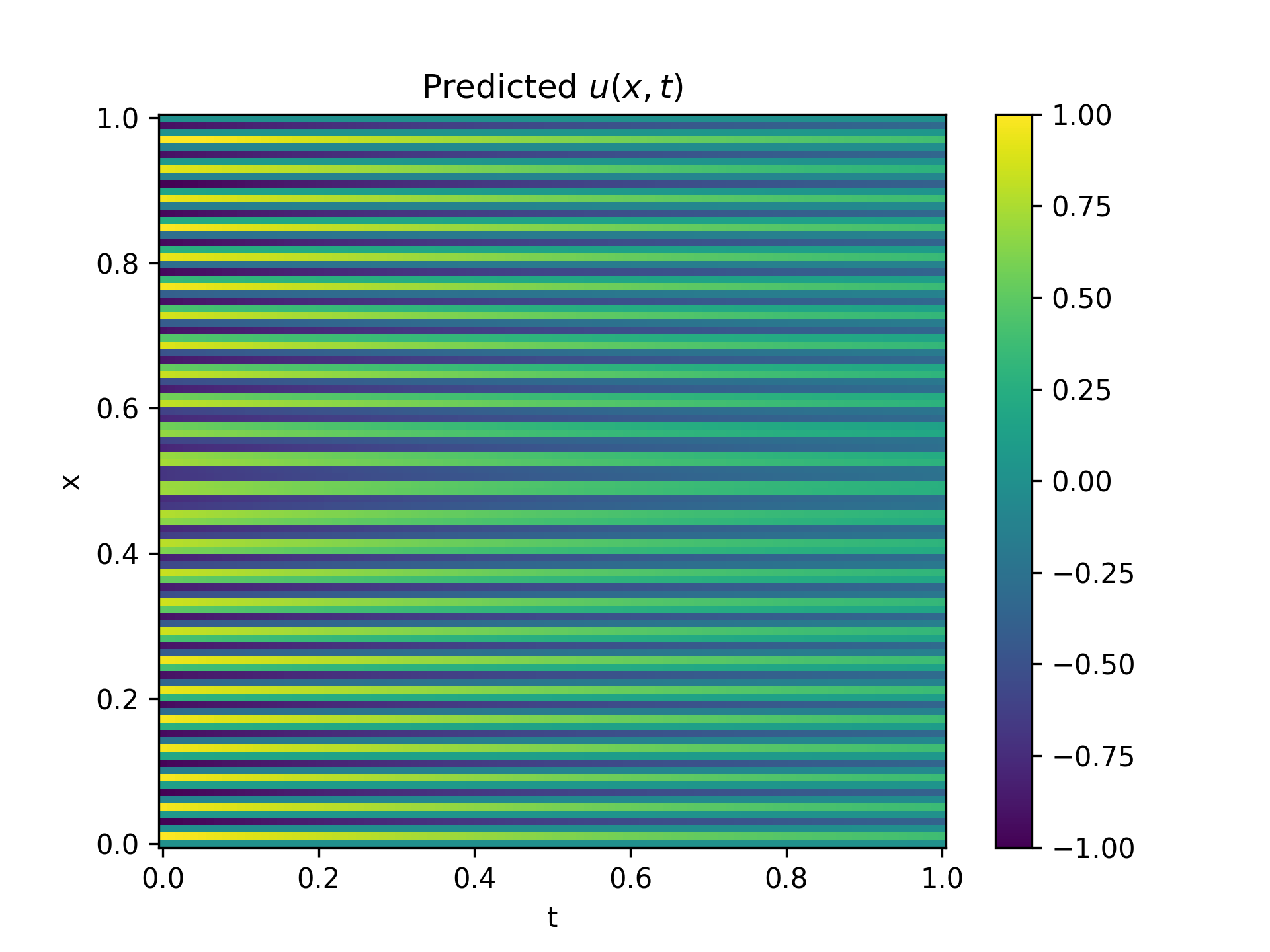}
        \label{fig:sub5}
    \end{subfigure}%
    \begin{subfigure}[t]{0.32\textwidth}
        \caption{Absolute Error}
        \includegraphics[width=\linewidth]{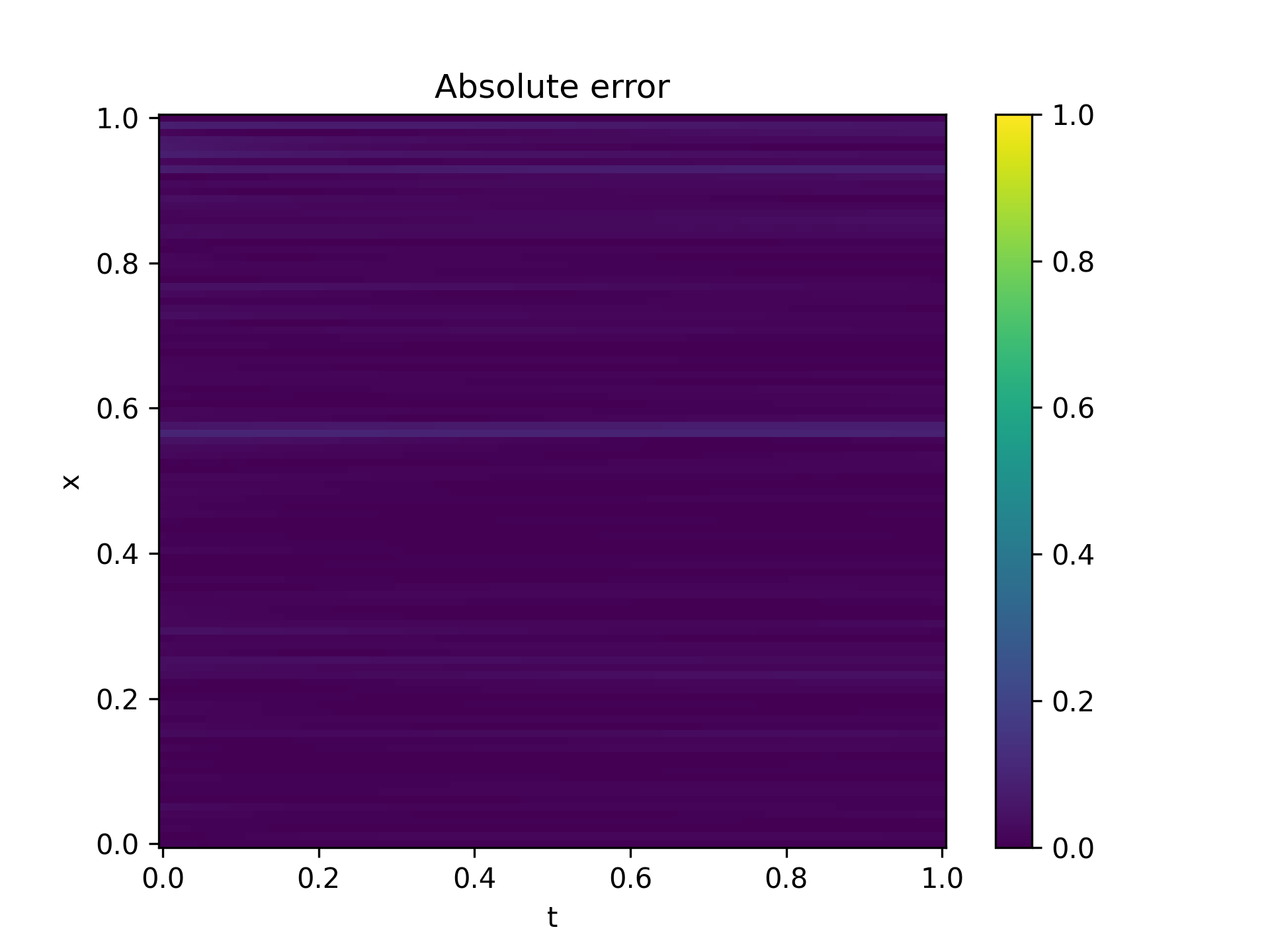}
        \label{fig:sub6}
    \end{subfigure}      
    \caption{Predicted solutions from MLP,  Free-RBF-KAN, RBF-KAN, and KAN on the heat conduction problem in Section~\ref{sec:heat}.}
    \label{fig:heat}
\end{figure}

\begin{figure}[h!]
    \centering
    \begin{subfigure}[t]{0.32\textwidth}
        \caption{Exact}
        \includegraphics[width=\linewidth]{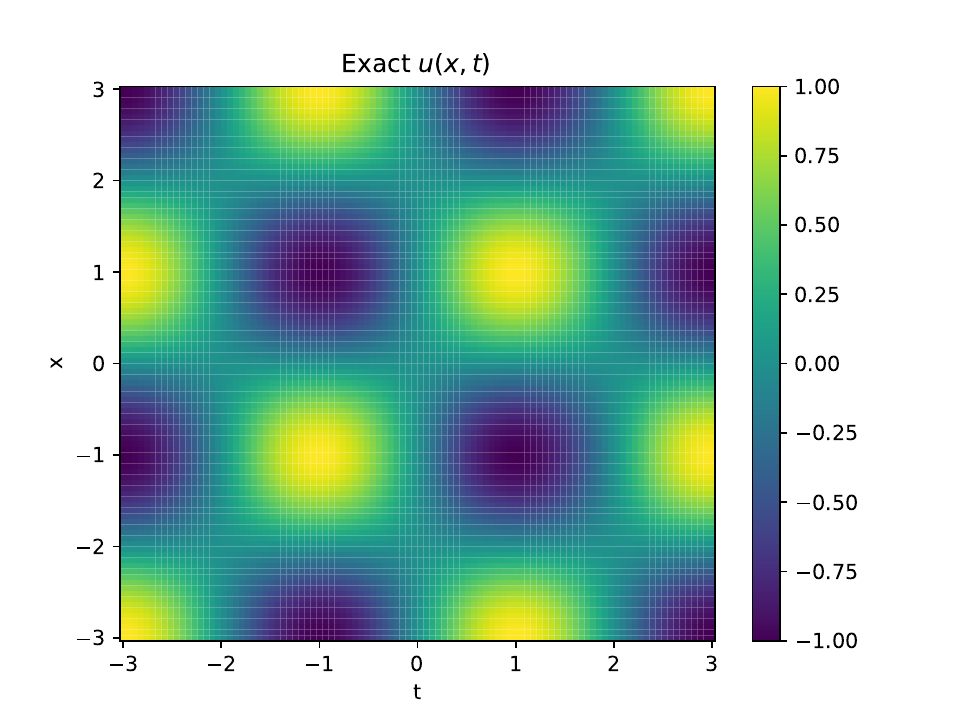}
        \label{fig:sub13}
    \end{subfigure}%
    \begin{subfigure}[t]{0.32\textwidth}
        \caption{MLP}
        \includegraphics[width=\linewidth]{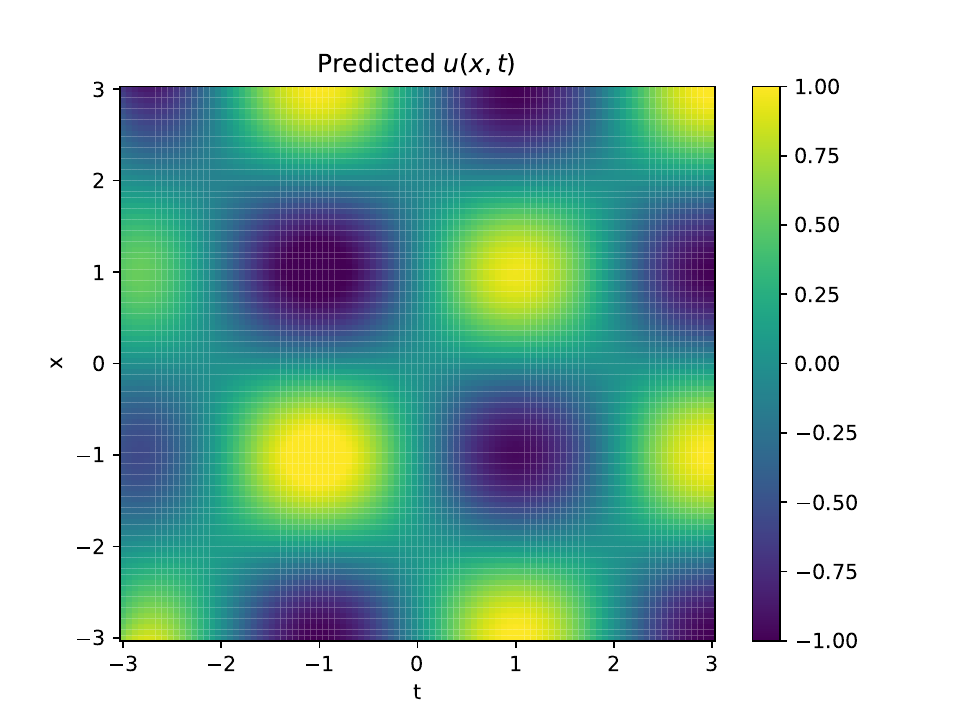}
        \label{fig:sub14}
    \end{subfigure}%
    \begin{subfigure}[t]{0.32\textwidth}
        \caption{Absolute Error}
        \includegraphics[width=\linewidth]{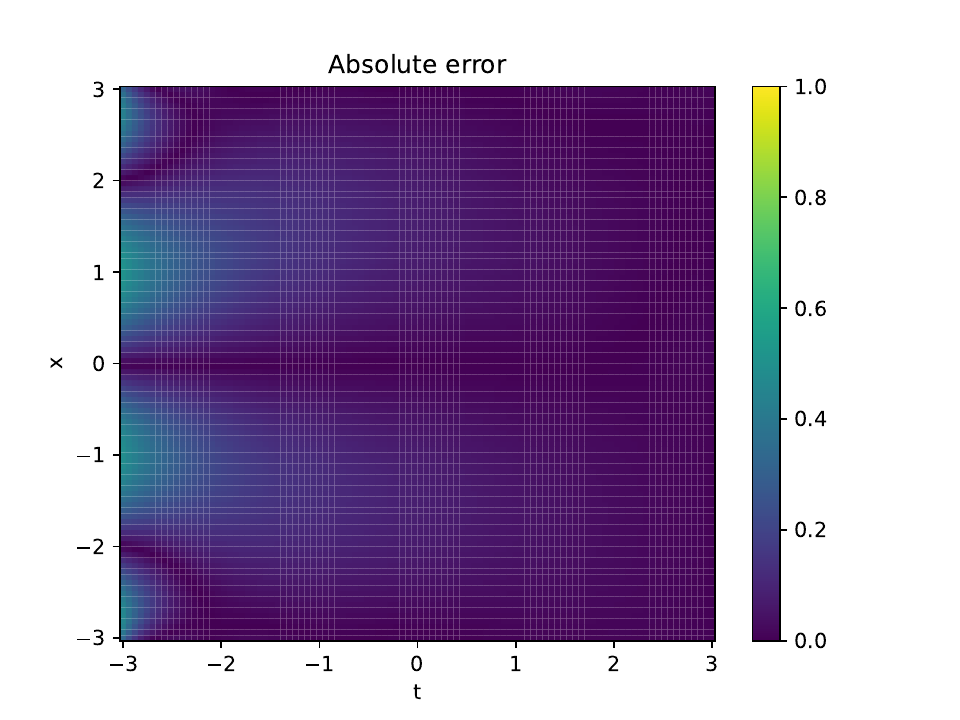}
        \label{fig:sub15}
    \end{subfigure}
    \begin{subfigure}[t]{0.32\textwidth}
        \caption{Exact}
        \includegraphics[width=\linewidth]{helmotz_Exact_u_KAN.pdf}
        \label{fig:sub22}
    \end{subfigure}%
    \begin{subfigure}[t]{0.32\textwidth}
        \caption{KAN}
        \includegraphics[width=\linewidth]{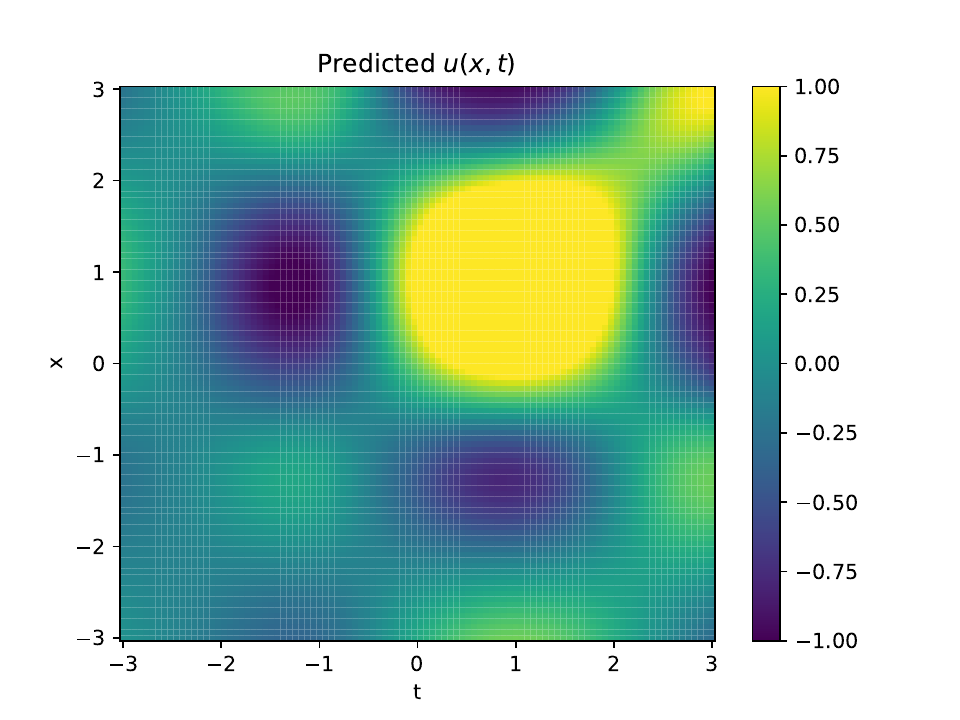}
        \label{fig:sub23}
    \end{subfigure}%
    \begin{subfigure}[t]{0.32\textwidth}
        \caption{Absolute Error}
        \includegraphics[width=\linewidth]{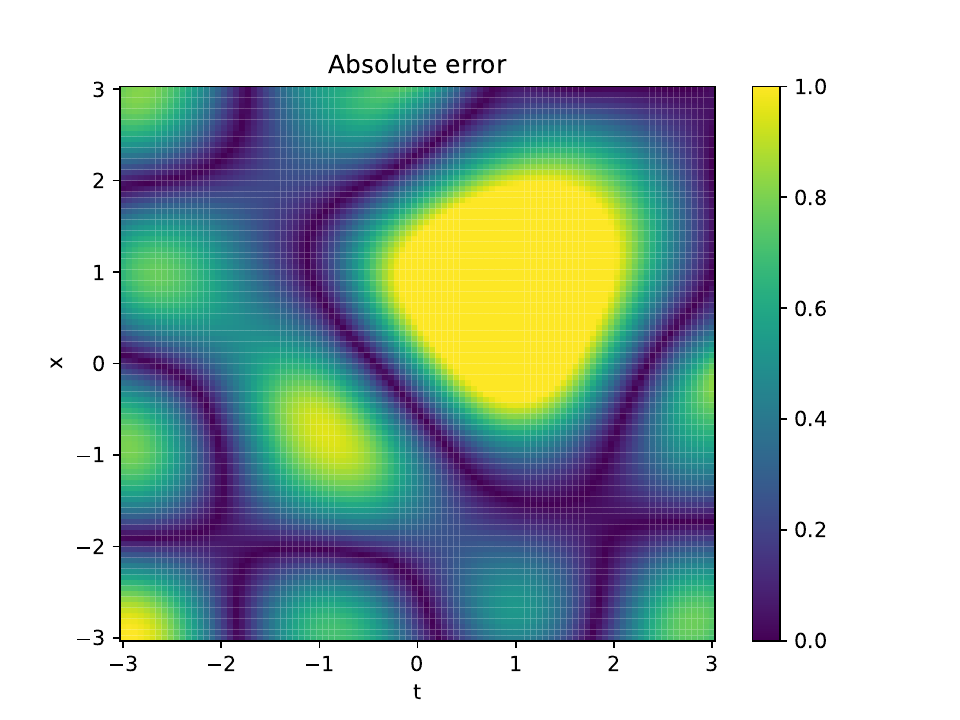}
        \label{fig:sub24}
    \end{subfigure}
    \centering
    \begin{subfigure}[t]{0.32\textwidth}
        \caption{Exact}
        \includegraphics[width=\linewidth]{helmotz_Exact_u_KAN.pdf}
        \label{fig:sub16}
    \end{subfigure}%
    \begin{subfigure}[t]{0.32\textwidth}
        \caption{RBF-KAN}
        \includegraphics[width=\linewidth]{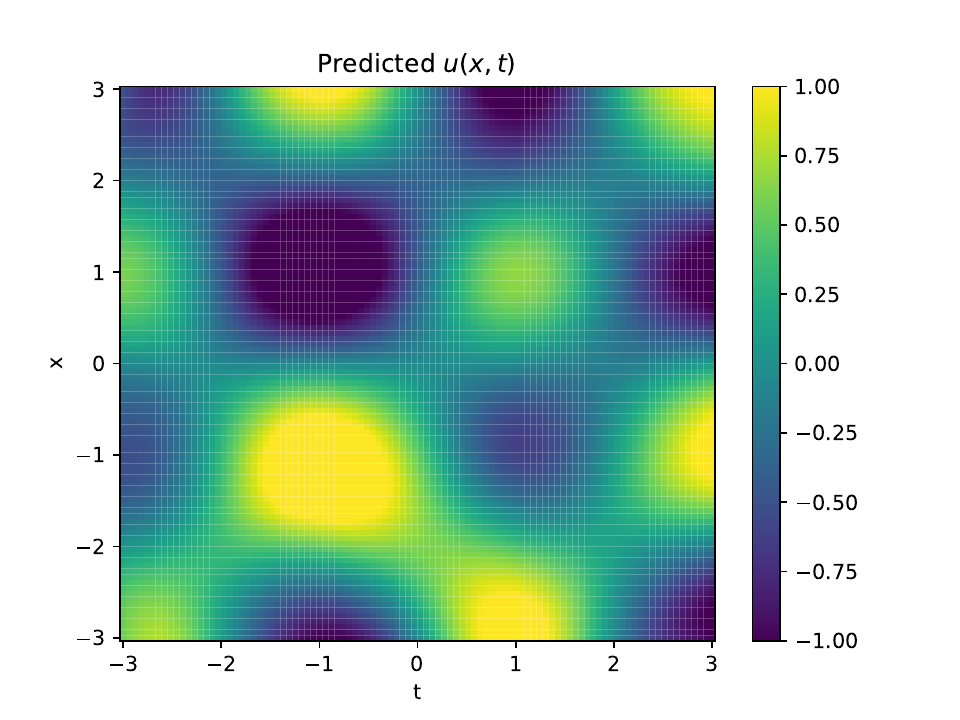}
        \label{fig:sub17}
    \end{subfigure}%
    \begin{subfigure}[t]{0.32\textwidth}
        \caption{Absolute Error}
        \includegraphics[width=\linewidth]{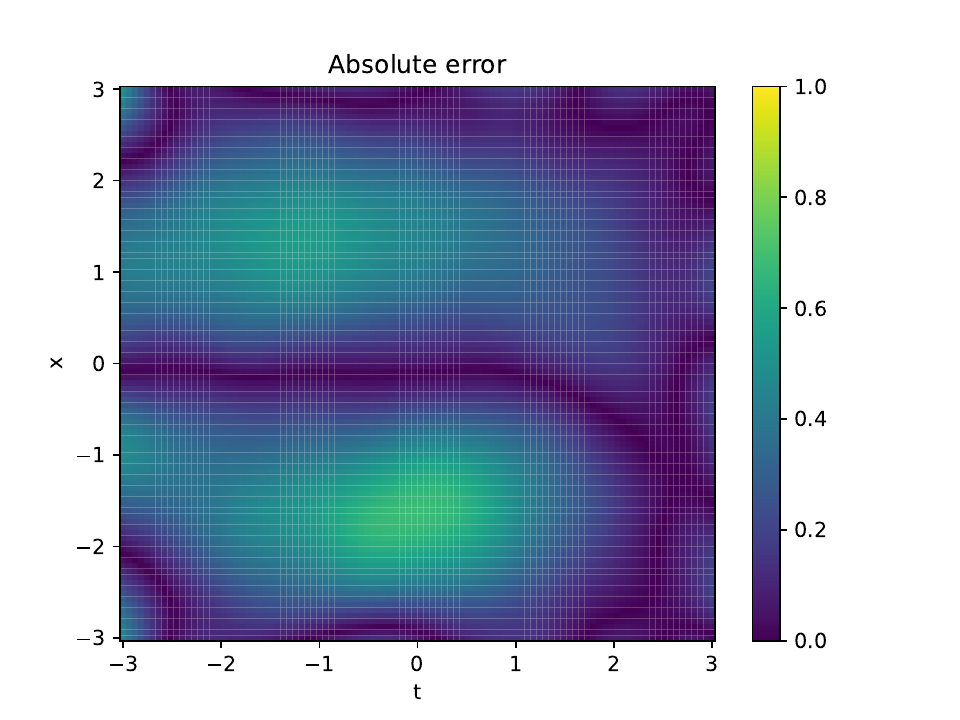}
        \label{fig:sub18}
    \end{subfigure}
    \centering
    \begin{subfigure}[t]{0.32\textwidth}
        \caption{Exact}
        \includegraphics[width=\linewidth]{helmotz_Exact_u_KAN.pdf}
        \label{fig:sub19}
    \end{subfigure}%
    \begin{subfigure}[t]{0.32\textwidth}
        \caption{Free-RBF-KAN}
        \includegraphics[width=\linewidth]{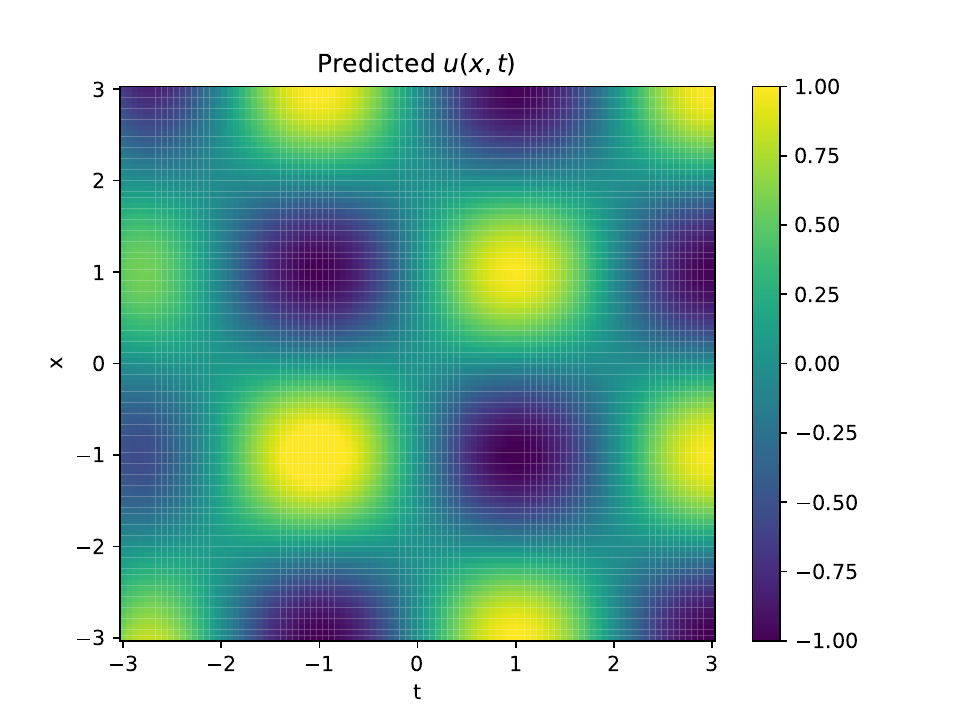}
        \label{fig:sub20}
    \end{subfigure}%
    \begin{subfigure}[t]{0.32\textwidth}
        \caption{Absolute Error}
        \includegraphics[width=\linewidth]{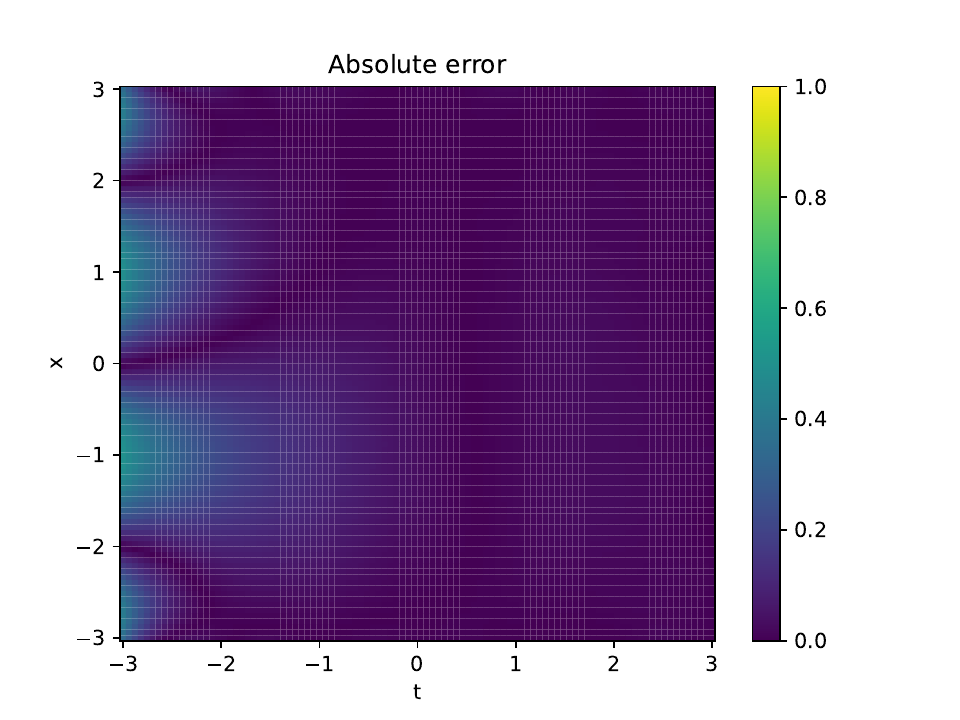}
        \label{fig:sub21}
    \end{subfigure}
    \centering
    \caption{
    The exact solutions (left column),
    the predicted solutions from MLP, RBF-KAN, Free-RBF-KAN and KAN (middle column),
    and the errors (right column)
    for the 2D Helmholtz equation in Section~\ref{sec:helm}.}
    \label{fig:helmholtz}
\end{figure}

\end{document}